\newtheorem{theorem}{Theorem}
\newtheorem{lemma}{Lemma}
\newtheorem{proposition}{Proposition}
\newtheorem{assumption}{Assumption}
\newtheorem{example}{Example}
\newif\ifWITHSUPP
\newcommand{\defeq}{\overset{\mathrm{def}}{=}}
\newcommand{\tr}{\mathrm{Tr}}
\newcommand{\im}{\mathrm{Im}}
\def\pd<#1>{\left\langle #1 \right\rangle}
\def\floor[#1]{\left\lfloor #1 \right\rfloor}
\def\ceil[#1]{\left\lceil #1 \right\rceil}
\newcommand{\rmd}{\mathrm{d}}
\newcommand{\bE}{\mathbb{E}}
\newcommand{\bP}{\mathbb{P}}
\newcommand{\bR}{\mathbb{R}}
\newcommand{\bZ}{\mathbb{Z}}
\newcommand{\cF}{\mathcal{F}}
\title{Parameter Averaging for SGD Stabilizes the Implicit Bias towards Flat Regions}
\author{Atsushi Nitanda$^{1,2\dag}$, Ryuhei Kikuchi$^{1}$, Shugo Maeda$^{1}$
\vspace{2mm}\\
\normalsize{\textit{$^1$Kyushu Institute of Technology}} \\
\normalsize{\textit{$^2$RIKEN Center for Advanced Intelligence Project}} \\
\small{Email: $^\dag$nitanda@ai.kyutech.ac.jp}}
\date{}
\begin{document}

\maketitle

\begin{abstract}
It is often observed that stochastic gradient descent (SGD) and its variants implicitly select a solution with good generalization performance; such implicit bias is often characterized in terms of the \textit{sharpness} of the minima. \citet{kleinberg2018alternative} connected this bias with the smoothing effect of SGD which eliminates sharp local minima by the convolution using the stochastic gradient noise. We follow this line of research and study the commonly-used averaged SGD algorithm, which has been empirically observed in \citet{izmailov2018averaging} to prefer a flat minimum and therefore achieves better generalization. We prove that in certain problem settings, averaged SGD can efficiently optimize the smoothed objective which avoids sharp local minima. In experiments, we verify our theory and show that parameter averaging with an appropriate step size indeed leads to significant improvement in the performance of SGD.  
\end{abstract}
\section{INTRODUCTION}
Stochastic gradient descent (SGD) \citep{robbins1951stochastic} is a powerful learning method for training deep neural networks. SGD often exhibits higher generalization performance than many of its variants, even when they achieve faster convergence with respect to the training loss \citep{keskar2017improving,wilson2017marginal,luo2019adaptive}. Therefore, the study of {\it implicit bias} to characterize the parameters obtained by SGD has become an active research topic.

Among such studies, {\it flat minima} \citep{hinton1993keeping,hochreiter1997flat} has been recognized as an important notion relevant to the generalization performance of deep neural networks. \citet{hochreiter1997flat,keskar2017large} suggested that the flat minima generalize well compared to sharp minima, and \citet{neyshabur2017exploring} rigorously supported this correlation under $\ell_2$-regularization by using the PAC-Bayesian framework \citep{mcallester1998some,mcallester1999pac}. \citet{jiang2020fantastic} verified that the flatness measures reliably capture the generalization performance compared to many other complexities. Furthermore, \citet{keskar2017large} empirically demonstrated that SGD prefers a flat minimum due to its own stochastic gradient noise and \citet{kleinberg2018alternative} proved this implicit bias via the smoothing effect brought by the noise.

Along this line of research, there have been several attempts to develop optimization methods with stronger bias than SGD. Especially, stochastic weight averaging (SWA) \citep{izmailov2018averaging} and sharpness aware minimization (SAM) \citep{foret2020sharpness} achieved significant improvement in generalization performance over SGD. SWA is a cyclic averaging scheme for SGD, which includes the averaged SGD \citep{ruppert1988efficient,polyak1992acceleration} as a special case. It is well known that averaged SGD with an appropriately small step size or diminishing step size is a statistically optimal method for the convex optimization problems \citep{bach2011non,lacoste2012simpler,rakhlin2012making}. On the other hand, \citet{izmailov2018averaging} found that an appropriately large step size is rather preferable to a small step size when training deep neural networks with the averaged SGD.

\begin{figure*}[t]
\center
\begin{tabular}{ccc}
\begin{minipage}[t]{0.3\linewidth}
\centering
\includegraphics[width=\textwidth]{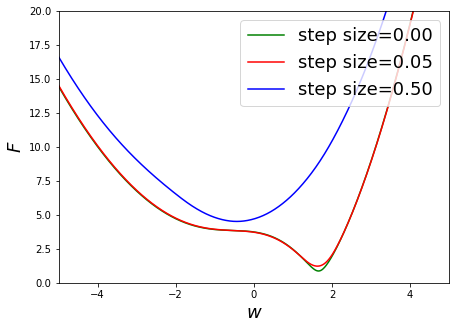} \\
\end{minipage} &
\begin{minipage}[t]{0.3\linewidth}
\centering
\includegraphics[width=\textwidth]{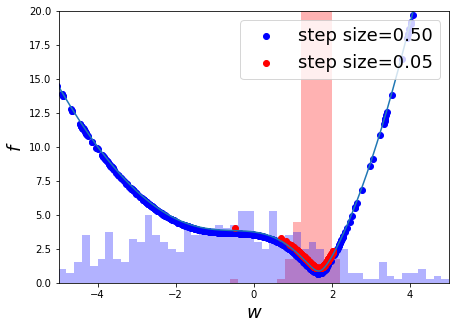} \\
\end{minipage} &
\begin{minipage}[t]{0.3\linewidth}
\centering
\includegraphics[width=\textwidth]{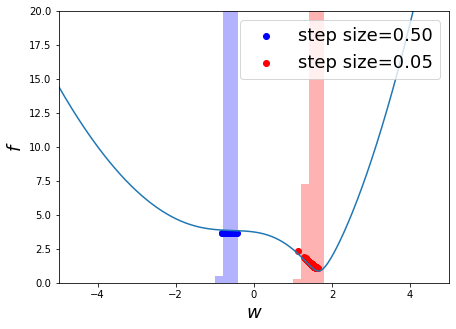} \\
\end{minipage} \\
\vspace{-2mm}
\begin{minipage}[t]{0.3\linewidth}
\centering
\includegraphics[width=\textwidth]{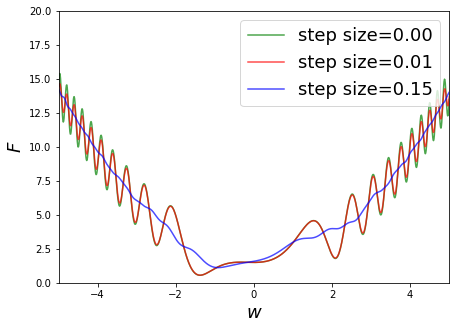} \\
\small (a)~(smoothed) objective
\end{minipage} &
\begin{minipage}[t]{0.3\linewidth}
\centering
\includegraphics[width=\textwidth]{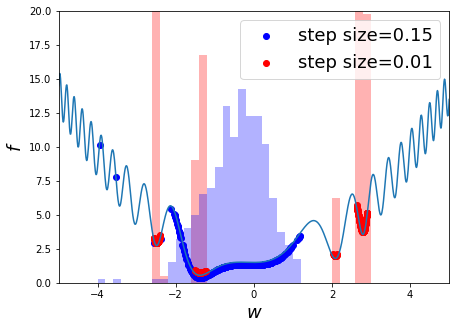} \\
\small (b)~SGD
\end{minipage} &
\begin{minipage}[t]{0.3\linewidth}
\centering
\includegraphics[width=\textwidth]{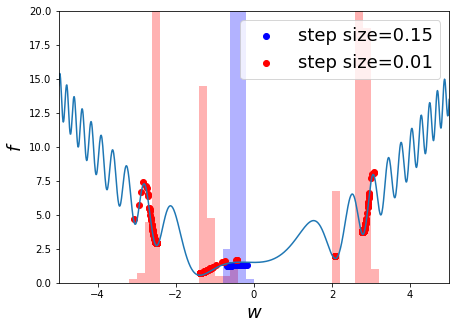} \\
\small (c)~averaged SGD
\end{minipage}
\end{tabular}
\caption{We run SGD and averaged SGD 500 times with the uniform stochastic gradient noise for two objective functions (top and bottom). Figure (a) depicts the objective function $f$ (green, $\eta=0$) and smoothed objectives $F$ (red and blue, $\eta>0$). Figures (b) and (c) plot convergent points by SGD and averaged SGD with histograms, respectively.}
\label{fig:alg_comparison}
\end{figure*}

The success of using a large step size can be attributed to the strong bias towards a flat minimum as discussed in \citet{izmailov2018averaging}. SGD with a large step size cannot stay in sharp regions because of the amplified stochastic gradient noise, and thus it moves to a flatter region. After a long run, SGD will finally oscillate according to an invariant distribution. Then, by taking the average, we can get the mean of this distribution, which is located inside a flat region. Although this provides a good insight into how the averaged SGD with a large step size behaves, the theoretical understanding remains elusive. Hence, the research problem we aim to address is  

\vspace{-1mm}
\begin{center}
{\it When and why does the averaged SGD converge to a flat region more stably than SGD?}
\end{center}
\vspace{-1mm}

In our work, we address this question by analyzing the averaged SGD based on the smoothing effect of stochastic gradient noise.

\subsection{Contributions}
We employ the {\it alternative view} of SGD \citep{kleinberg2018alternative} that connects SGD for the objective $f(w)$ with the optimization of the smoothed objective $F(v) = \bE[ f(v- \eta \epsilon') ]$ through the change of variable $w \mapsto v = w- \eta \nabla f(w)$, where $\epsilon'$ is a stochastic gradient noise. In fact, $F$ is a smoothed function because it is basically the convolution using the stochastic gradient noise, and the strength of the smoothness is controlled by the step size $\eta$ as seen in the left figures of Figure \ref{fig:alg_comparison}.

Therefore, an optimization method that can minimize $F$ with a certain accuracy is expected to converge to a flat region, and our aim is to clarify {\it when the averaged SGD can get closer to the minimizer of $F$ than SGD under the same step size.}
Figures \ref{fig:alg_comparison} and \ref{fig:var_eta} illustrate toy examples, respectively, where the averaged SGD works better in the above sense especially when using a large step size and where the averaged SGD can approximately optimize $F$ with $\eta$ varying.

\begin{figure}[ht]
\centering
\vspace{-2mm}
\includegraphics[width=0.8\linewidth]{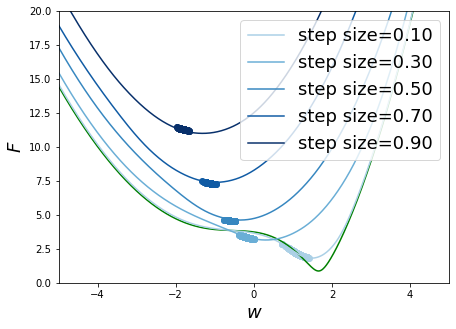}
\caption{The figure plots the original objective (green), smoothed objectives (blue, darker is smoother), and convergent points obtained by the averaged SGD which is run 500 times for each step size $\eta \in \{0,1,~0.3,~0.5,~0.7,~0.9\}$.} 
\label{fig:var_eta}
\end{figure}

Our contributions are summarized below: 
\begin{itemize}
    \setlength{\leftskip}{-3mm}
    \item We derive the upper and lower bound conditions in Theorem \ref{thm:asgd} and Eq.~(\ref{eq:improvement_condition}) for the SGD error: $D_\infty = \lim_{T\rightarrow \infty}\sqrt{\frac{1}{T+1}\sum_{t=0}^T \|v_t - v_*\|^2}$ such that averaged SGD gets closer to the minimizer $v_*=\arg\min F(v)$ than SGD.
    \item We estimate the SGD error $D_\infty$ in Proposition \ref{prop:sgd_upper_bound} and \ref{prop:sgd_lower_bound} under additional lower bound on the stochastic gradient noise and sort of one-point strong convexity for $F$, with a specific Example \ref{example:sanity_check} satisfying the required condition.
    \item We empirically observe that averaged SGD achieves high test accuracy on image classification tasks when using a relatively large step size that makes SGD itself unstable.
\end{itemize}

\section{PRELIMINARY}
In this section, we introduce the stochastic gradient descent (SGD) for general problems including the risk minimization problems appearing in machine learning, and introduce the alternative view of SGD developed by \cite{kleinberg2018alternative}. 

\subsection{Stochastic gradient descent}\label{subsec:sgd}
Let $f: \bR^d \rightarrow \bR$ be a smooth nonconvex objective function to be minimized. For simplicity, we assume $f$ is nonnegative.
A stochastic gradient descent, randomly initialized at $w_0$, for optimizing $f$ is described as follows: for $t = 0, 1, 2, \ldots$
\begin{equation}\label{eq:sgd}
    w_{t+1} = w_t - \eta \left( \nabla f (w_t) + \epsilon_{t+1}(w_t) \right), 
\end{equation}
where $\eta > 0$ is the step size and $\epsilon_{t+1}: \bR^d \rightarrow \bR^d$ is a random field corresponding to the stochastic gradient noise i.e., for any $w \in \bR^d$, $\{ \epsilon_{t+1}(w)\}_{t=0}^\infty$ is a sequence of zero-mean random variables taking values in $\bR^d$. 
A typical setup of the above is an empirical risk minimization in machine learning.

\begin{example}\label{example:risk_mimimization}
Let $\ell(w,z)$ be a loss function where $w \in \bR^d$ and $z \in \bR^p$ represent the parameter and data example, respectively, and let $r(w)$ be a regularization function. Let $\{z_i\}_{i=1}^n$ be training examples. Then, the regularized empirical risk is defined as follows:
\[ f(w) = \frac{1}{n}\sum_{i=1}^n \ell(w,z_i) + \lambda r(w), \]
where $\lambda > 0$ is a regularization coefficient. A standard stochastic gradient at $t$-th iterate $w_t$ is defined as $\nabla_w \ell( w_t, z_{i_{t+1}}) + \lambda \nabla r(w)$ where $\{i_{t+1}\}_{t=0}^\infty$ are i.i.d. random variables following the uniform distribution on $\{1,\ldots,n\}$.
Hence, the stochastic noise is $\epsilon_{t+1}(w) = \nabla_w \ell(w, z_{i_{t+1}}) - \frac{1}{n}\sum_{i=1}^n \nabla_w \ell (w,z_i)$. 
Furthermore, we note that the above setup can extend to the case involving the data augmentation by replacing the loss $\ell(w,z_i)$ with $\bE[\ell(w,Z_i)]$, where $Z_i$ is a random variable of an augmented example, and replacing the stochastic gradient of the loss $\nabla_w \ell( w_t, z_{i_{t+1}})$ with $\nabla_w \ell( w_t, Z_{i_{t+1}})$.
\end{example}

\subsection{Alternative view of SGD}\label{subsec:alt_view}
An alternative view \citep{kleinberg2018alternative} of SGD is key in our analysis connecting the stochastic gradient noise with the smoothing for the objective function. 

This view of SGD translates $\{w_t\}_{t=0}^\infty$ into associated iterations $\{v_t\}_{t=0}^\infty$ and we analyze the update of $v_t$ instead of $w_t$. We define $v_t$ as a parameter obtained by the exact gradient descent from $w_t$, that is, $v_t = w_t - \eta \nabla f(w_t)$.
Since $w_{t+1} = v_t - \eta \epsilon_{t+1}(w_t)$, we get $v_{t+1} = v_t - \eta \epsilon_{t+1}(w_t) - \eta \nabla f( v_t - \eta \epsilon_{t+1}(w_t) )$, where the notation $\nabla f( v_t - \eta \epsilon_{t+1}(w_t))$ represents $\nabla f(w)|_{w = v_t - \eta \epsilon_{t+1}(w_t)}$.
As shown in Appendix \ref{subsec:isgd}, under a specific setting given later, the translation $w \mapsto v=w - \eta \nabla f(w)$ is a smooth invertible injection and its inverse is differentiable. Thus, we can identify $\epsilon'_{t+1}(v)$ with $\epsilon_{t+1}(w)$ through the map $w\mapsto v$.
Then, we get an update rule of $v_t$:
\begin{equation}\label{eq:implicit_sgd}
v_{t+1} = v_t - \eta \epsilon'_{t+1}(v_t) - \eta \nabla f( v_t - \eta \epsilon'_{t+1}(v_t) ).
\end{equation}
For convenience, we refer to the rule (\ref{eq:implicit_sgd}) as an {\it alternative stochastic gradient descent} in this paper.
Since, the conditional expectation of $\epsilon_{t+1}'(v_t)$ at $v_t$ is zero, we expect that the alternative SGD (\ref{eq:implicit_sgd}) minimizes the following smoothed objective function:
\begin{equation}\label{eq:smoothed_objective}
    F(v) = \bE[ f( v - \eta \epsilon'(v)) ],
\end{equation}
where $\epsilon'$ is an independent copy of $\epsilon'_1, \epsilon'_2, \ldots$.
However, we note that the alternative SGD is a biased SGD because $\nabla F(v_t) \neq \bE[\nabla f( v_t - \eta \epsilon_{t+1}'(v_t) )]$ in general
\footnote{We see by the chain rule $\nabla F(v) = \bE[ (I - \eta J_{\epsilon'(v)}^\top(v))\nabla f( v - \eta \epsilon'(v))]$. Here, the notation $\nabla f( v - \eta \epsilon'(v) )$ represents $\nabla f(w)|_{w= v - \eta \epsilon'(v)}$. Hence, $\nabla F(v) \neq \bE[ \nabla f( v - \eta \epsilon'(v))]$ unless $J_{\epsilon'(v)}^\top(v)=0$.}, unless $\epsilon_{t+1}'(v)$ is free from $v$. 

The function (\ref{eq:smoothed_objective}) is indeed a smoothed function of $f$ by the convolution using the stochastic gradient noise $\eta \epsilon'$, where the step size $\eta$ controls the strength of smoothness.
Figure \ref{fig:alg_comparison} depicts how a nonconvex function $f$ is smoothened. In this figure, we observe that by using an appropriately large step size, sharp local minima of $f$ vanishes and the solution of $F$ emerges at the flat area of $f$. Moreover, by taking Taylor expansion of $f$, we see that $F(v)$ is a regularized objective that penalizes the high (positive) curvature of $f$ along the noise direction in expectation:
\begin{equation*}
    F(v) = f(v) + \frac{\eta^2}{2} \tr\left(\nabla^2 f(v) \bE[\epsilon'(v) \epsilon'(v)^\top]\right) + O(\eta^3).
\end{equation*}
Therefore, we can expect that stochastic gradient descent avoids sharp minima and converges to a flat region.

\citet{kleinberg2018alternative} showed the convergence to a point $v_\circ \in \bR^d$ under the regularity condition: $\bE[\nabla f( v - \eta \epsilon'(v) )]^\top (v-v_\circ) \geq \exists c \|v-v_\circ\|^2$ via the alternative view of SGD.
Inspired by their work, we analyze the optimization capability of the averaged SGD for approximating the minimizer $v_* = \arg\min_{v \in \bR^d} F(v)$.

\section{CONVERGENCE ANALYSIS}\label{sec:asgd}
\citet{izmailov2018averaging} empirically demonstrated that averaged SGD converges to a flat region and achieves superior generalization when using a relatively large step size such that SGD oscillates.
We theoretically explain this phenomenon by showing that the averaged SGD can get closer to $v_*$ than SGD using the same step size under certain settings. 
In the averaged SGD, we run SGD (\ref{eq:sgd}) and take the average as follows: $\overline{w}_{T+1} = \frac{1}{T+1}\sum_{t=1}^{T+1} w_t$.

\subsection{Analysis of averaged SGD}
Our aim is to show $\lim_{T\rightarrow \infty}\overline{w}_T$ can be closer to $v_*$ than $\{w_t\}_{t=0}^\infty$ and $\{v_t\}_{t=0}^\infty$. 
Preferably, the alternative view of SGD (\ref{eq:implicit_sgd}) is more useful in analyzing the averaged SGD because the average $\overline{v}_T=\frac{1}{T+1}\sum_{t=0}^T v_t$ is consistent with $\overline{w}_T$ as confirmed below.
By definition,  
\[ \overline{w}_{T+1} = \overline{v}_T + \frac{\eta}{T+1}\sum_{t=0}^{T}\epsilon_{t+1}(w_t), \]
where the noise term $\nicefrac{\sum_{t=0}^{T}\epsilon_{t+1}(w_t)}{(T+1)}$ is zero in expectation and its variance is upper bounded by $\nicefrac{\sigma_1^2}{(T+1)}$ under Assumption {\bf (A2)}. Hence, $\overline{w}_{T+1} - \overline{v}_{T}$ converges to zero in probability by Chebyshev's inequality; for any $r>0$, $\bP[ \|\overline{w}_{T+1} - \overline{v}_{T}\| > r ] \leq \nicefrac{\sigma_1^2}{(T+1)r^2} \rightarrow 0$ as $T \rightarrow \infty$, and the analysis of $\lim_{T\rightarrow \infty} \overline{w}_{T}$ reduces to that of $\lim_{T\rightarrow \infty} \overline{v}_T$.

We can immediately see that the averaged SGD is always closer to the solution than SGD in average because of Jensen's inequality: 
$\bE[\|  \overline{v}_T - v_* \|^2] \leq \frac{1}{T+1}\sum_{t=0}^T \bE[\|  v_t - v_* \|^2]$.
We next present a nontrivial bound on the left-hand side of this inequality (i.e., the error achieved by the averaged SGD).
To do so, we make the following assumptions on the objective function and stochastic gradient noise.
\begin{assumption}\label{assump:asgd}\ 
\begin{itemize}[topsep=0mm,itemsep=0mm]
    \item[{\bf(A1)}] $f: \bR^d \rightarrow \bR$ is nonnegative, twice continuously differentiable, and its Hessian is bounded, i.e., there is a constant $L>0$ such that for any $w \in \bR^d$, $-LI \preceq \nabla^2 f(w) \preceq LI$.
    \item[{\bf(A2)}] Random fields $\{\epsilon_{t+1}\}_{t=0}^\infty$ are independent copies each other\footnote{We suppose $\{\epsilon_{t+1}\}_{t=0}^\infty$ are independent copies each other. That is, there is a measurable map from a probability space: $\Omega \ni z \mapsto \epsilon(w,z) \in \bR^d$, and then $\epsilon_{t+1}$ can be written as a measurable map from a product probability space: $\Omega^{\bZ_{\geq 0}} \ni \{z_{s+1}\}_{s=0}^\infty \mapsto \epsilon(w,z_{t+1}) \in \bR^d$ when explicitly representing them.} and $\epsilon_{t+1}(w)$ is differentiable in $w$. Moreover, for any $w \in \bR^d$ $\bE[\epsilon_{t+1}(w) ]=0$ and there are $\sigma_1, \sigma_2 > 0$ such that for any $w \in \bR^d$, $\bE[ \|\epsilon_{t+1}(w)\|^2 ] \leq \sigma_1^2$ and $\bE[\|J_{\epsilon_{t+1}}^\top (w) \|_2 ] \leq \sigma_2$, where $J_{\epsilon_{t+1}}$ is Jacobian of $\epsilon_{t+1}$. 
    \item[{\bf(A3)}] Hessian of $F$ at $v_*$ is positive definite, i.e., there is a positive constant $\mu>0$ such that $\nabla^2 F(v_*) \succeq \mu I$.    
    \item[{\bf(A4)}] There exist positive constants $\gamma$ and $M$ such that for any $v \in \bR^d$, $\|\nabla F(v) - \nabla^2 F(v_*)(v-v_*)\| \leq \gamma + M\|v-v_*\|^2$.
\end{itemize}
\end{assumption}
The smoothness and boundedness conditions {\bf(A1)} on the objective function and the zero-mean and the bounded variance conditions {\bf(A2)} on stochastic gradient noise are commonly assumed in the convergence analysis for the stochastic optimization methods. Standard stochastic gradient noises $\{\epsilon_{t+1}\}_{t=0}^\infty$ defined in Example \ref{example:risk_mimimization} are independent copies. Moreover, if Hessian matrix satisfies $-LI \preceq \nabla^2_w \ell(w,z) \preceq LI$ in Example \ref{example:risk_mimimization}, then the last condition on $J_{\epsilon_{t+1}}$ also holds with at least $\sigma_2 = 2L$ because $J_{\epsilon_{t+1}}(w) = \nabla_w^2 \ell(w,z_{i_{t+1}}) - \nabla^2 f(w)$.

{\bf (A3)} and {\bf (A4)} are assumptions made on the smoothed objective function and used to connect the error of the averaged SGD with that of SGD. {\bf (A3)} ensures the strict positivity of Hessian $\nabla^2 F(v_*)$. {\bf (A4)} measures the linear approximation error of $\nabla F$ at $v_*$. For instance, functions that have bounded third-order derivatives satisfy {\bf (A4)} with $\gamma=0$, which is also assumed in \citet{dieuleveut2020} to show the superiority of the averaging scheme for strongly convex optimization problems. 

The following theorem describes the relationship between the errors achieved by SGD and averaged SGD.
\begin{theorem}\label{thm:asgd}
Under Assumptions {\bf (A1)}--{\bf (A4)}, run the averaged SGD for $T$-iterations with the step size $\eta \leq \frac{1}{2L}$, then $\overline{v}_T$ satisfies the following inequality:
\begin{align*}
    \bE[ &\| \overline{v}_T - v_* \|]
    \leq \min \bigg\{ D_T,~ O\left( T^{-\frac{1}{2}} \right) \\
    &+ \frac{4\sigma_1 \sigma_2 \eta^{\frac{3}{2}} L^{\frac{1}{2}}}{\sqrt{3} \mu} 
    + \frac{2D_{T+1}}{\eta \mu \sqrt{T+1}}
    + \frac{\gamma + M D_T^2}{\mu} \bigg\},
\end{align*}
where we set $D_T := \sqrt{\frac{1}{T+1}\bE\left[\sum_{t=0}^T \left\| v_t - v_* \right\|^2 \right]}$.
\end{theorem}
\begin{proof}[Proof sketch]
We set $R(v) = \nabla F(v) - \nabla^2 F(v_*) (v - v_*)$. 
Taking average of $R(v_t)$ over $t\in \{0,1,\ldots,T\}$, we get
$\nabla^2 F(v_*) (\overline{v}_T - v_*) = \frac{1}{T+1}\sum_{t=0}^T \nabla F(v_t) - \frac{1}{T+1}\sum_{t=0}^T R(v_t).$
Therefore, by taking the expectation of the norm of both sides, we obtain 
\begin{align*}
    \mu \bE\left[\| \overline{v}_T - v_*\|\right] \leq \frac{1}{T+1} \bE\left[ \left\| \sum_{t=0}^T \nabla F(v_t) \right\|\right] 
    + \gamma + MD_T^2,
\end{align*}
where we applied {\bf (A3)} and {\bf (A4)}.
Since the first term of the RHS is the norm of the average of gradients $\nabla F(v_t)$ across all iterations, this term will decrease up to a small error as the optimization proceeds:
\[ O(T^{-1/2}) + \frac{4}{\sqrt{3}} \sigma_1 \sigma_2 \eta^{\frac{3}{2}} L^{\frac{1}{2}} 
    + \frac{2D_{T+1}}{\eta\sqrt{T+1}}. \]
See Lemma \ref{lemma:sgd_appendix_2} in Appendix for the detailed derivation of this error.
Moreover, Jensen's inequality yields $\bE[ \| \overline{v}_T - v_* \|] \leq \sqrt{\bE[ \| \overline{v}_T - v_* \|^2]}  \leq D_T$.
\end{proof}

The term $D_T = \sqrt{\frac{1}{T+1}\bE\left[\sum_{t=0}^T \left\| v_t - v_* \right\|^2 \right]}$ in Theorem \ref{thm:asgd} is the root mean square of the distances between $v_*$ and SGD iterations. Hence $D_T$ converges up to an error depending on $\eta$ under certain assumptions as shown in Proposition \ref{prop:sgd_upper_bound} later or at least it is uniformly bounded as long as SGD performs in a compact domain. 
Hence, we suppose $D_\infty = \lim_{T\rightarrow \infty}D_T < \infty$ in the following argument. 

Let us see the relationship between errors finally achieved by SGD and averaged SGD. Taking the limit $T \rightarrow \infty$, we get the upper bound on $\lim_{T\rightarrow \infty} \bE[\|\overline{v}_T - v_*\|]$ as follows:
\begin{equation} \label{eq:asgd_upper_bound_limit}
 \min\left\{ D_\infty, \frac{4\sigma_1 \sigma_2 \eta^{\frac{3}{2}} L^{\frac{1}{2}}}{\sqrt{3} \mu} + \frac{\gamma + MD_\infty^2}{\mu}\right\}.  
\end{equation}

The bound (\ref{eq:asgd_upper_bound_limit}) shows nontrivial improvement by averaged SGD over SGD in the sense: $\lim _{T\rightarrow \infty} \bE[\|\overline{v}_T - v_*\|] \ll D_\infty$ when SGD error $D_\infty$ satisfies the following condition:
\begin{equation}\label{eq:improvement_condition}
    \frac{4\sigma_1 \sigma_2 \eta^{\frac{3}{2}} L^{\frac{1}{2}}}{\sqrt{3} \mu} + \frac{\gamma}{\mu}
    \ll D_\infty 
    \ll \frac{\mu}{M}.
\end{equation}
Here, the informal notation $a \ll b$ (for $a, b>0$) is used for describing the situation where $b$ is sufficiently large such that $a/b$ approaches $0$. To discuss the condition (\ref{eq:improvement_condition}), consider the case where $\mu,~M>0$ satisfying Assumptions {\bf (A3)} and {\bf (A4)} can be uniformly chosen with respect to $\eta \rightarrow 0$ and $\gamma=0$. Then, the upper bound $D_\infty \ll \mu/M=O(1)$ means the convergence of SGD to some extent. And the lower bound $\Omega(\eta^{3/2}) \ll D_\infty$ is also typically satisfied for mildly small $\eta$ because SGD oscillates at least of the order of $\eta$ if the stochastic gradient does not vanish (see Proposition \ref{prop:sgd_lower_bound}).

\subsection{Evaluation of SGD error $D_\infty$}
In this section, we estimate the value $D_\infty$ under reasonable problem setups.
Inspired by \cite{kleinberg2018alternative}\footnote{{\bf (A5)} is slightly different from that in \cite{kleinberg2018alternative}. They assumed $\bE[\nabla f( v - \eta \epsilon'(v) )]^\top (v-v_\circ) \geq \exists c \|v-v_\circ\|^2$ with a point $v_\circ$ which does not necessarily coincides with $v_*$. If $J_{\epsilon'}\equiv 0$, then $v_\circ$ equals $v_*$ and both assumptions coincide because $\nabla F(v) = \bE[\nabla f( v - \eta \epsilon'(v) )]$ in this case.}, we make the following regularity condition on the smoothed objective $F$ at $v_*$.  

\begin{assumption}\label{assump:sgd2}\ 
{\bf(A5)} There exists $c>0$ such that for any $v \in \bR^d$, $\nabla F(v)^\top (v-v_*) \geq c \|v-v_*\|^2$.
\end{assumption}
{\bf(A5)} is a sort of one-point strong convexity at the solution $v_*$, but we note that it allows for the nonconvexity of $F$.
Under this assumption, we obtain the upper bound on SGD error $D_T$. 

\begin{proposition}\label{prop:sgd_upper_bound}
Under Assumptions {\bf (A1)}, {\bf (A2)}, and {\bf (A5)}, run SGD for $T$-iterations with the step size $\eta \leq \frac{1}{2L}$, then we get
\begin{align*}
    D_T^2 \leq O\left(T^{-1}\right) 
    + \frac{2\eta\sigma_1^2}{c} 
    + \frac{8\eta^2 \sigma_1^2 L}{3c} \left(1 + \frac{2 \eta \sigma_2^2}{c} \right). 
\end{align*}
\end{proposition}
Our result allows for a larger step size than that in \citet{kleinberg2018alternative} because we do not stick to the convergence of the last iterate.
Proposition \ref{prop:sgd_upper_bound} implies $D_\infty \leq \sqrt{\frac{2\eta\sigma_1^2}{c} + \frac{8\eta^2 \sigma_1^2 L}{3c} \left(1 + \frac{2 \eta \sigma_2^2}{c} \right)}$, meaning convergence of SGD to a point at least at a distance of the RHS from the minimizer $v_*$ of $F$.

The next proposition provides the lower bound on SGD error $D_T$ under the assumption that the variance of stochastic gradient noise is uniformly lower bounded.
\begin{proposition}\label{prop:sgd_lower_bound}
Suppose that Assumptions {\bf(A1)} and {\bf(A2)} hold, and that there exists $\sigma_3>0$ such that for any $w\in\bR^d$, $\bE[\|\epsilon_{t+1}(w)\|^2] \geq \sigma_3^2$.
Running SGD with $\eta \leq \frac{3\sigma_3^2}{32\sigma_1^2 L}$, we get
\[ \frac{\eta^2\sigma_3^2}{8} \leq D_T^2 + O(T^{-1}). \]
\end{proposition}

Propositions \ref{prop:sgd_upper_bound} and \ref{prop:sgd_lower_bound} with $T\rightarrow \infty$ yield the estimation of $D_\infty$. That is, we get that under the assumptions made in these two propositions, 
\begin{equation}\label{eq:sgd_estimation}
 \frac{\eta \sigma_3}{2\sqrt{2}} \leq D_\infty \leq \sqrt{\frac{2\eta\sigma_1^2}{c} + \frac{8\eta^2 \sigma_1^2 L}{3c} \left(1 + \frac{2 \eta \sigma_2^2}{c} \right)}.
\end{equation}

Summarizing the discussion so far, the averaged SGD achieves a nontrivial improvement if the above estimation Eq.~(\ref{eq:sgd_estimation}) satisfies the condition Eq.~(\ref{eq:improvement_condition}).

\subsection{Example}
As a sanity check, we present examples that satisfy the improvement condition Eq.~(\ref{eq:improvement_condition}) by instantiating the estimation Eq.~(\ref{eq:sgd_estimation}) of SGD error $D_\infty$ and we confirm the certain improvement by the averaged SGD. For a detailed explanation, see Appendix \ref{sec:example} and \ref{sec:example2}.

\begin{figure}[ht]
\centering
\includegraphics[width=\linewidth]{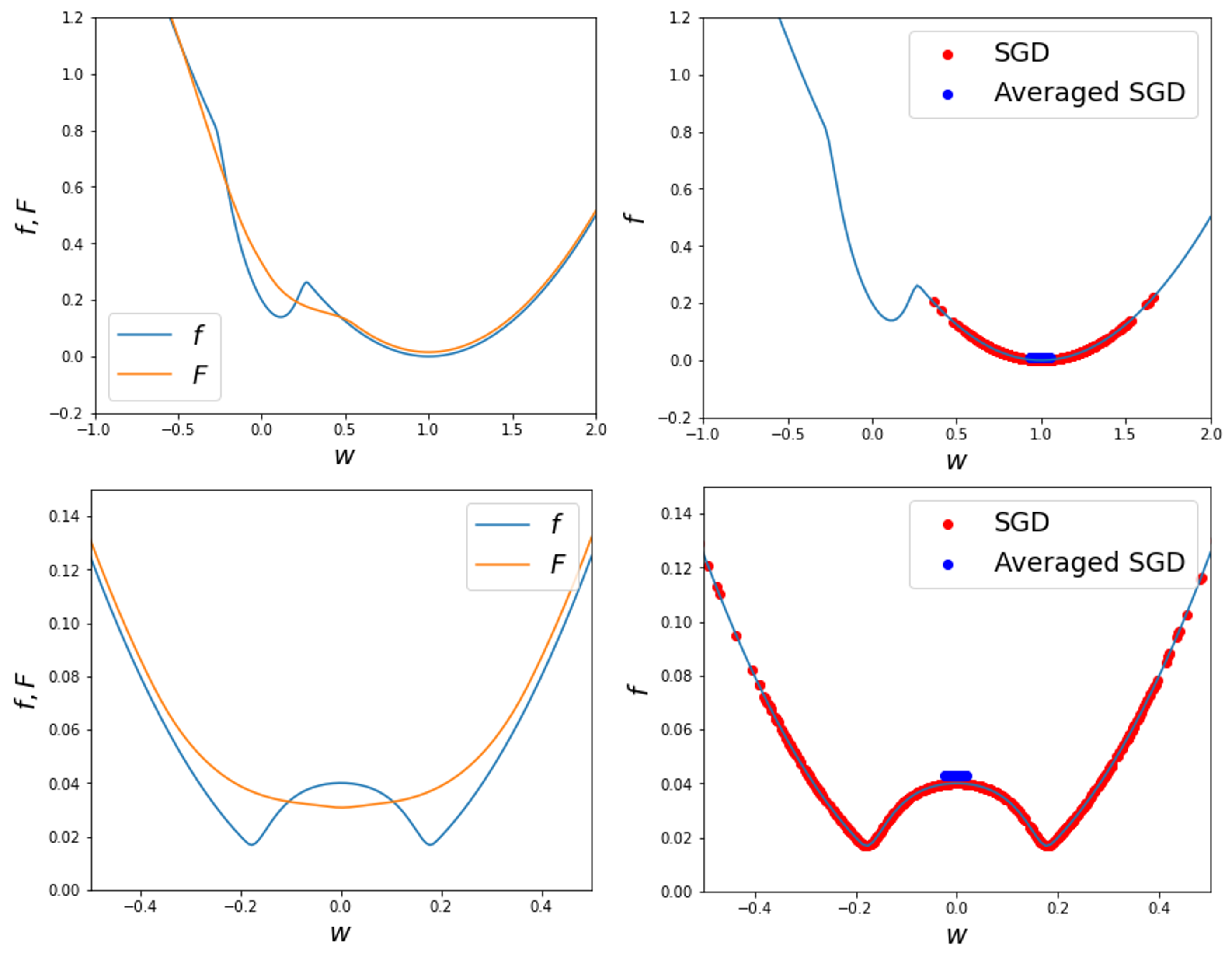}
\caption{We run SGD and averaged SGD for two problems corresponding top and bottom figures. For each case, the left figure depicts the original objective (blue) $f$ and smoothed objective (orange) $F$ and the right figure depicts convergent points of SGD (red) and averaged SGD (blue).} 
\label{fig:sanity_check}
\end{figure} 

\begin{example}\label{example:sanity_check}
We consider two examples of functions $f$ on $\bR$ which fall into the setting of $\gamma=0$ or $M=0$. Blue curves in the top and bottom of Figure \ref{fig:sanity_check} depict these functions, respectively. 
As the stochastic gradient noises, we take i.i.d. uniform noise on an interval $[-r,r]$ in both cases. Hence, $\sigma_1^2=\sigma_3^2=r^2/3$ and $\sigma_2=0$. Then, the smoothed objectives $F$ (orange curves in Figure \ref{fig:sanity_check}) with an appropriate step size $\eta$ look like the quadratic functions that possess the unique global minimizers $v_*$. In both cases, we observe the improved performance of the averaged SGD regarding the convergence to $v_*$. We verify the condition Eq.~(\ref{eq:improvement_condition}) below. For the details, see Appendix.

\begin{enumerate}[topsep=0mm,itemsep=0mm]
\setlength{\leftskip}{-3mm}
\item {\bf First case (top of Figure \ref{fig:sanity_check}):}
    An objective is $f(w)=0.5(w-1)^2 + g_\delta(w)$, where $g_\delta$ is a downward bump of width $\delta$ and height $\delta$. This bump makes a local minimum of $f$ around $v=0$ and $f$ has the optimal solution $v=1$. Lipschitz constant is $L=1+O(1/\delta)$. 
    In this setting, the smoothing only affects the term $g_\delta$, i.e.,
    \[ F(v) = \frac{1}{2}(v-1)^2 + \bE[ g_\delta( v - \eta \epsilon' ) ] + (const).\]
    Then, the local minimum of $f$ around $v=0$ is eliminated by the smoothing with an appropriate $\eta$, and the global solution $v_*=1$ of $F$ only remains. In fact, if we take $\eta = O(\delta/r)$, we can verify all assumptions with constants: $\mu=1,~c=1/3,~\gamma=0,~M=8/9$.
    By instantiating Eq.~(\ref{eq:sgd_estimation}), we see
    \[ \delta \lesssim D_\infty \lesssim \delta + \sqrt{\delta(1+r)}. \]
    Hence, the condition Eq.~(\ref{eq:improvement_condition}): $0 \ll D_\infty \ll 9/8$ is satisfied under the setting with mildly small $\delta$. Specifically, we have $\lim_{T\rightarrow \infty}\bE[\|\overline{v}_T-v_*\|] \leq \frac{8}{9}D_\infty^2$.
\setlength{\leftskip}{-3mm}    
\item {\bf Second case (bottom of Figure \ref{fig:sanity_check}):}
    An objective is $f(w)=0.5w^2 + g_\delta(w)$, where $g_\delta$ is a upward bump of width $\delta$ and height $\delta^2$. $f$ has two global minima beside the bump. Lipschitz constant is $O(1)$ regardless of $\delta$.
    As in the previous example, we see 
    \[ F(v) = \frac{1}{2}v^2 + \bE[ g_\delta( v - \eta \epsilon' ) ] + (const). \]
    Then, two global minima of $f$ are eliminated by the smoothing with an appropriate $\eta$ and a unique global minimum $v_*=0$ of $F$ emerges. In this case, we do not need to make $\eta$ small depending on $\delta$. In fact, as long as $\eta r$ greater than $\delta$ by a certain amount, we can verify all assumptions with constants: $\mu=1,~c=1/2,~\gamma=\exists C\delta^2/(\eta r),~M=0$.
    By instantiating Eq.~(\ref{eq:sgd_estimation}), we obtain the estimation of $D_\infty$ as follows.
    \[ \eta r \lesssim D_\infty \lesssim \sqrt{\eta}r + \eta r. \]
    Hence, the condition Eq.~(\ref{eq:improvement_condition}): $C\delta^2/(\eta r) \ll D_\infty \ll \infty$ is satisfied under the setting $\delta^2 \ll \eta^2 r^2$. Specifically, we have $\lim_{T\rightarrow \infty}\bE[\|\overline{v}_T-v_*\|] \leq C\delta^2/(\eta r)$.
\end{enumerate}
\end{example}


\section{EXPERIMENTS}\label{sec:exp}
We evaluate the empirical performance of SGD and averaged SGD on image classification tasks using CIFAR10 and CIFAR100 datasets. To evaluate the usefulness of the parameter averaging for the other methods, we also compare SAM \citep{foret2020sharpness} with its averaging variant. For the parameter averaging, we employ the tail-averaging scheme where the average is taken over the last phase of training.

\begin{wraptable}[9]{r}[0mm]{37mm}
\vspace{-3mm}
 \caption{Decay schedules for (averaged) SGD.} 
 \label{table:step_size_schedule}
 \centering
 \begin{tabular}{cc}
  \toprule
  $\eta$ & milestones \\ 
  \midrule
  \textit{s} & \{80,160,240\} \\
  \textit{m} & \{80,160\} \\
  \textit{l} & \{300\} \\
  \bottomrule
 \end{tabular}
\end{wraptable}

We use the CNN architectures: ResNet \citep{he2016deep} with 50-layers (ResNet-50), WideResNet \citep{zagoruyko2016wide} with 28 layers and width 10 (WRN-28-10), and Pyramid Network \citep{han2017deep} with 272 layers and widening factor 200. In all settings, we use the standard data augmentations: horizontal flip, normalization, padding by four pixels, random crop, and cutout \citep{devries2017cutout}, and we employ the weight decay with the coefficient $0.005$. Moreover, we use the multi-step strategy for the step size, which decays the step size by a factor once the number of epochs reaches one of the given milestones. To see the dependence on the step size, we use two decay schedules for the parameter averaging.

\begin{table*}[t]
\caption{Comparison of test classification accuracies on CIFAR100 and CIFAR10 datasets.}
\label{table:alg_comparison}
\begin{center}
\begin{footnotesize}
  \begin{tabular}{ccccccccc}
    \toprule
~&
~&
\multicolumn{3}{c}{CIFAR100} & &
\multicolumn{3}{c}{CIFAR10} 
\\
\midrule
~&
$\eta$&
ResNet-50&
WRN-28-10&
Pyramid&
$\eta$&
ResNet-50&
WRN-28-10&
Pyramid
\\
\midrule
SGD&
\textit{s}&
80.83 \scriptsize{(0.21)}&
81.81 \scriptsize{(0.29)}&
81.43 \scriptsize{(0.32)}&
\textit{s}&
95.95 \scriptsize{(0.11)}&
96.85 \scriptsize{(0.16)}&
96.41 \scriptsize{(0.22)}
\\
\addlinespace
\multirow{2}{*}{\begin{tabular}{c}Averaged\\SGD\end{tabular}}&
\textit{s}&
82.13 \scriptsize{(0.22)}&
83.13 \scriptsize{(0.13)}&
84.23 \scriptsize{(0.03)}&
\textit{s}&
96.58 \scriptsize{(0.14)}&
97.24 \scriptsize{(0.07)}&
97.07 \scriptsize{(0.08)}
\\
&
\textit{l}&
\textbf{82.87 \scriptsize{(0.13)}}&
\textbf{84.23 \scriptsize{(0.10)}}&
\textbf{85.12 \scriptsize{(0.20)}}&
\textit{m}&
\textbf{96.89 \scriptsize{(0.05)}}&
\textbf{97.44 \scriptsize{(0.04)}}&
\textbf{97.28 \scriptsize{(0.13)}}
\\
\midrule
SAM&
\textit{s}&
82.56 \scriptsize{(0.14)}&
83.80 \scriptsize{(0.27)}&
84.59 \scriptsize{(0.24)}&
\textit{s}&
\textbf{96.34 \scriptsize{(0.12)}}&
97.14 \scriptsize{(0.05)}&
97.34 \scriptsize{(0.03)}
\\
\addlinespace
\multirow{2}{*}{\begin{tabular}{c}Averaged\\SAM\end{tabular}}&
\textit{s}&
82.64 \scriptsize{(0.12)}&
84.09 \scriptsize{(0.30)}&
85.40 \scriptsize{(0.12)}&
\textit{s}&
96.33 \scriptsize{(0.10)}&
\textbf{97.21 \scriptsize{(0.05)}}&
97.34 \scriptsize{(0.03)}
\\
&
\textit{l}&
\textbf{82.73 \scriptsize{(0.28)}}&
\textbf{84.55 \scriptsize{(0.17)}}&
\textbf{86.00 \scriptsize{(0.04)}}&
\textit{m}&
96.31 \scriptsize{(0.11)}&
97.20 \scriptsize{(0.06)}&
\textbf{97.35 \scriptsize{(0.06)}}
\\
\bottomrule
\end{tabular}
\end{footnotesize}
\end{center}
\end{table*}

Table \ref{table:step_size_schedule} summarizes milestones labeled by the symbols: \textit{`s'}, \textit{`m'}, and \textit{`l'}. The initial step size and a decay factor of the step size are set to $0.1$ and $0.2$ in all cases. The averages are taken from 300 epochs for the schedules \textit{`s'} and \textit{`l'}, and from 160 epochs for the schedule \textit{`m'}. These hyperparameters were tuned based on the validation performance.

For a fair comparison, we run (averaged) SGD with 400 epochs and (averaged) SAM with 200 epochs because SAM requires two gradients per iteration, and thus the milestones and starting epoch of taking averages are also halved for (averaged) SAM. We evaluate each method 5 times for ResNet-50 and WRN-28-10, and 3 times for Pyramid network. The averages of classification accuracies are listed in Table \ref{table:alg_comparison} with the standard deviations in brackets. We observe from the table that the parameter averaging for SGD improves the classification accuracies in all cases, especially on CIFAR100 dataset. Eventually, the averaged SGD achieves comparable performance with SAM. Moreover, we also observe improvement by parameter averaging for SAM in most cases, which is consistent with the observations in \citet{kaddour2022}. A similar improvement is also observed for the cosine annealing learning rate (see Appendix \ref{sec:add_exp}).

\begin{figure}[ht]
\centering
\includegraphics[width=0.8\linewidth]{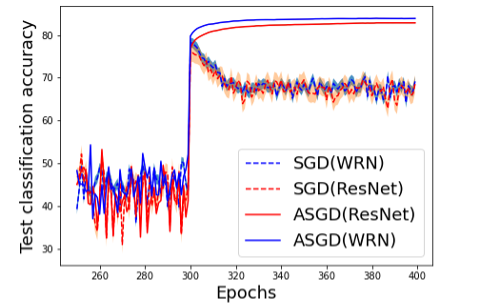}
\caption{Test accuracies achieved by SGD and averaged SGD on CIFAR100 with ResNet-50 and WRN-28-10.}
\label{fig:cifar100_test_curves}
\end{figure}
Comparing results on CIFAR100 and CIFAR10, the large step size is better, and the small step size is relatively poor on CIFAR100 dataset, whereas the small step size generally works on CIFAR10 dataset. If we use the step-size strategy \textit{`l'} for CIFAR10, then the improvement becomes small (see Appendix \ref{sec:add_exp} for this result). This is because the strong smoothing brought by a large step size can be harmful to simple datasets such that the normal SGD already achieves high accuracies. Moreover, we note that the averaged SGD on CIFAR100 quite works well with the large step-size schedule \textit{`l'}, even though SGD itself is unstable and poorly performs under this schedule as seen in Figure \ref{fig:cifar100_test_curves}. Indeed, the accuracy of SGD temporarily increases at the 300 epochs because of the decay of the step size, and it decreases thereafter. However, the parameter averaging brings significant improvement in accuracy even under such a situation.

\begin{figure*}[t]
\center
\begin{tabular}{ccc}
\begin{minipage}[t]{0.31\linewidth}
\centering
\includegraphics[width=\textwidth]{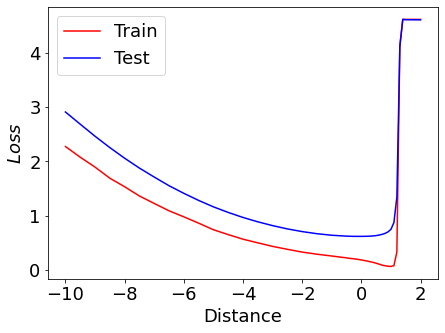} \\
\end{minipage} 
\hspace{1mm}
\begin{minipage}[t]{0.31\linewidth}
\centering
\includegraphics[width=\textwidth]{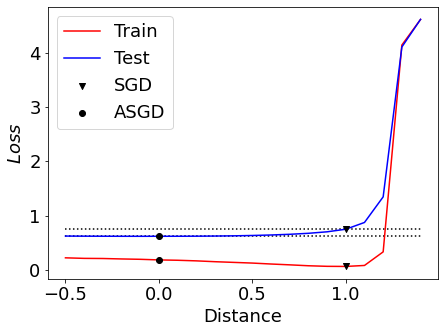} \\
\end{minipage} 
\begin{minipage}[t]{0.33\linewidth}
\centering
\includegraphics[width=\textwidth]{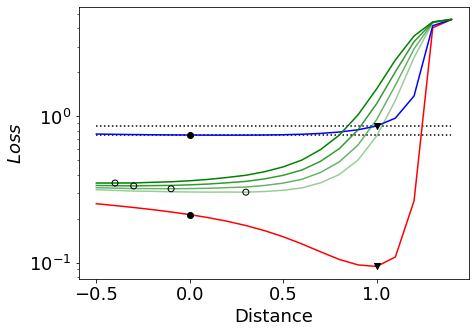} \\
\end{minipage} 
\end{tabular}
\caption{Sections of the train (red) and test (blue) loss landscapes across the parameters obtained by averaged SGD (distance=$0$) and SGD (distance=$1$) for ResNet-50 with CIFAR100 dataset. SGD is run with a small step size after running averaged SGD with a large step size. The middle figure is the close-up view at the edge. The triangle and circle markers represent convergent parameters by SGD and averaged SGD, respectively. The right figure plots smoothed train loss functions (green, darker is smoother) with Gaussian noises in addition to train and test losses. The blank circles are the minimizers of smoothed objectives.}
\label{fig:asymmetric_valley}
\end{figure*}

Next, we observe in Figure \ref{fig:asymmetric_valley} that the loss landscape around the convergent point is in better shape and forms an asymmetric valley as observed by \cite{he2019asymmetric}. Specifically, Figure \ref{fig:asymmetric_valley} depicts the section of train and test loss functions across parameters obtained by the averaged SGD and SGD.  The middle figure is the close-up view at the edge and plots each parameter. The right figure depicts the train and test losses, and the smoothed train losses with Gaussian noises in log-scale.
We observe in Figure \ref{fig:asymmetric_valley} the phenomenon that SGD converges to an edge and averaged SGD converges to a flat side. Our theory helps explain this phenomenon because the minimizer of the asymptotic valley can be shifted to a flat side by the smoothing as confirmed in the right of Figure \ref{fig:asymmetric_valley} as well as a synthetic setting (Figure \ref{fig:var_eta}). Moreover, the right figure indicates the possibility that the smoothed objective with appropriate stochastic gradient noise well approximates test loss. In fact, we observe that averaged SGD achieves a lower test loss which makes about 2\% improvement in the classification error on CIFAR100 dataset. These observations are consistent with the experiments conducted in \citet{he2019asymmetric}.

\begin{figure}[!h]
\centering
\includegraphics[width=\linewidth]{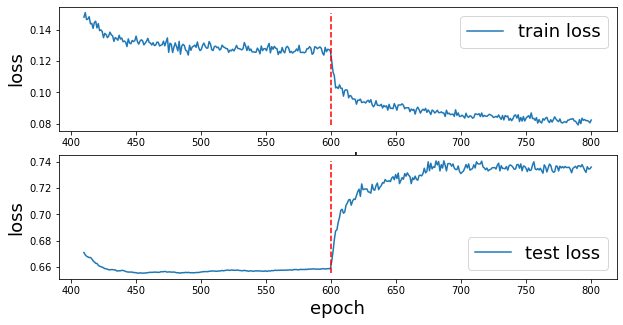}
\caption{For ResNet-50 with CIFAR100 dataset, we run SGD for 200 epochs from a parameter obtained by the averaged SGD with 600 epochs. The figure depicts training loss (top) and test loss (bottom). The red line is a change point of the methods. The learning rate for SGD is eventually annealed to $0$ from $0.02$ used for the final phase of averaged SGD.}
\label{fig:loss_plot}
\end{figure}

Finally, to observe the non-trivial bias of the averaged SGD, we run SGD with the smaller step size from the parameter obtained by the averaged SGD with 600 epochs. Then, we observe in Figure \ref{fig:loss_plot} that the test loss is gradually getting worse, whereas the training loss is getting better. This result proves that the averaged SGD with a large step size does not necessarily converge to a local minimum of the training loss, but converges to a parameter achieving a smaller test loss. Moreover, we confirm that the test accuracy also decreases from 83.35 attained by the averaged SGD to 82.42.

\section{Related Literature and Discussion}
\paragraph{Flat Minimum.}
\citet{keskar2017large} and \citet{hochreiter1997flat} showed a flat minimum generalizes well and a sharp minimum generalizes poorly. However, \cite{dinh2017sharp} pointed out that the flatness solely cannot explain generalization because it can be easily manipulated.
\citet{neyshabur2017exploring} rigorously proved the sharpness combined with $\ell_2$-norm provides a generalization bound and \citet{jiang2020fantastic} confirmed this strong correlation through large scale experiments.
\citet{keskar2017large} also argued that SGD converges to a flat minimum and \citet{he2019asymmetric} argued that the averaged SGD tends to converge to an asymmetric valley due to the stochastic gradient noise. 
Indeed, several works \citep{kleinberg2018alternative,zhou2020towards} justified the implicit bias of SGD towards a flat region or asymmetric valley.
Moreover, \cite{izmailov2018averaging,foret2020sharpness,damian2021label,orvieto2022explicit,kaddour2022} studied the techniques to further bring out the bias of SGD with improved performance.
In particular, SAM and SWA achieved a significant improvement in generalization.
In our paper, we show that parameter averaging stabilizes the convergence to a flat region or asymmetric valley, and suggest the usefulness of the combination with the large step size for the difficult dataset which needs a stronger regularization.
Besides, several authors proposed methods that explicitly inject noise for improving generalization \citep{chaudhari2019entropy}, in particular for the large batch setting \citep{wen2018smoothout,haruki2019gradient,lin2020extrapolation}.

\paragraph{Markov Chain Interpretation of SGD.}
\citet{dieuleveut2020,yu2020analysis} provided the Markov chain interpretation of SGD.
They showed the marginal distribution of the parameter of SGD converges to an invariant distribution for convex and nonconvex optimization problems, respectively.
Moreover, \citet{dieuleveut2020} showed the mean of the invariant distribution, attained by the averaged SGD, is at distance $O(\eta)$ from the minimizer of the objective function, whereas SGD itself oscillates at distance $O(\sqrt{\eta})$ in the convex optimization settings.
\citet{izmailov2018averaging} also attributed the success of SWA to such a phenomenon.
That is, \citet{izmailov2018averaging} explained that SGD travels on the hypersphere because of the convergence to Gaussian distribution and the concentration on the sphere under a simplified setting, and thus averaging scheme allows us to go inside of the sphere which may be flat.
We can say our contribution is to theoretically justify this intuition by extending the result obtained by \citet{dieuleveut2020} to a nonconvex optimization setting.
In the proof, we utilize the alternative view of SGD \citep{kleinberg2018alternative} in a non-asymptotic way under some conditions not on the original objective but on the smoothed objective function.
Combination with the Markov chain view for nonconvex objective \citep{yu2020analysis} may be helpful in more detailed analyses.

\paragraph{Step size and Minibatch.}
SGD with a large step size often suffers from stochastic gradient noise and becomes unstable. This is the reason why we should take a smaller step size so that SGD converges. In this sense, the minibatching of stochastic gradients clearly plays the same role as the step size and sometimes brings additional gains. For instance, \citet{smith2017don} empirically demonstrated that the number of parameter updates can be reduced, maintaining the learning curves on both training and test datasets by increasing minibatch size instead of decreasing step size.
We remark that our analysis can incorporate the minibatch by dividing $\sigma_1^2$ and $\sigma_2^2$ in Proposition \ref{prop:sgd_upper_bound} and \ref{thm:asgd} by the minibatch size, and we can see certain improvements of optimization accuracy as well. 

\paragraph{Edge of Stability.}
Recently, \citet{cohen2020gradient} showed the deterministic gradient descent for deep neural networks enters \text{Edge of Stability} phase.
In the traditional optimization theory, the step size is set to be smaller than $1/L$ to ensure stable convergence and we also make such a restriction.
On the other hand, the Edge of Stability phase appears when using a higher step size than $2/L$.
In this phase, the training loss behaves non-monotonically and the sharpness finally stabilizes around $2/\eta$.
This can be explained as follows \citep{lewkowycz2020large}; if the sharpness around the current parameter is large compared to the step size, then gradient descent cannot stay in such a region and goes to a flatter region that can accommodate the large step size.
There are works \citep{pmlr-v162-arora22a,pmlr-v162-ahn22a} which attempted to rigorously justify the Edge of Stability phase.
Interestingly, their analyses are based on a similar intuition to ours, but we consider a different regime of step sizes and a different factor (stochastic noise or larger step size than $2/L$) brings the implicit bias towards flat regions. We believe establishing a unified theory is interesting future research.


\paragraph{Averaged SGD.}
The averaged SGD \citep{ruppert1988efficient,polyak1992acceleration} is a popular variant of SGD, which returns the average of parameters obtained by SGD aiming at stabilizing the convergence.
Because of the better generalization performance, many works conducted convergence rate analysis in the expected risk minimization setting and derived the asymptotically optimal rates $O(1/\sqrt{T})$ and $O(1/T)$ for non-strongly convex and strongly convex problems \citep{nemirovski2009robust,bach2011non,rakhlin2012making,lacoste2012simpler}. However, the schedule of step size is basically designed to optimize the original objective function, and hence the implicit bias coming from the large step size will eventually disappear.
When applying a non-diminishing step size schedule, the non-zero optimization error basically remains.
What we do in this paper is to characterize it as the implicit bias toward a flat region.

\subsection{Technical difference.}
The proof idea of Proposition \ref{prop:sgd_upper_bound} relies on the alternative view of SGD \citep{kleinberg2018alternative} which shows the existence of an associated SGD for the smoothed objective. However, since its stochastic gradient is a biased estimator, they showed the convergence not to the solution but to a point at which a sort of one-point strong convexity holds, and avoid the treatment of a biased estimator. Hence, the optimization of the smoothed objective is not guaranteed in their theory. On the other hand, optimization accuracy is the key in our theory, thus we need nontrivial refinement of the proof under a {\it normal} one-point strong convexity at the solution.

\section*{CONCLUSION}
\cite{izmailov2018averaging} observed the averaged SGD with a large step size finds a flatter solution than SGD and gave an intuitive explanation of this phenomenon. To support their observation, we analyzed the convergence capability of the averaged SGD for the smoothed objective $F$ via the alternative view of SGD \citep{kleinberg2018alternative}. Specifically, we derived the upper and lower bound conditions (Theorem \ref{thm:asgd} and Eq.~(\ref{eq:improvement_condition})) for SGD error $D_\infty$ so that the averaged SGD converges closer to $v_* = \arg\min F(v)$ than SGD. Furthermore, to verify these conditions, we estimated $D_\infty$ under additional lower bound on the stochastic gradient noise and sort of one-point strong convexity for $F$. Finally, we empirically observed that the averaged SGD with a large step size achieved superior performance on the image classification task and confirmed that the obtained parameter differs from that of SGD. 

Based on our findings, we suggest using large step sizes for difficult datasets to learn so that SGD itself oscillates and performs somewhat poorly. Then, by averaging the parameters, we can expect a significant improvement in generalization.

One limitation is that to verify the improvement condition: Eq.~(\ref{eq:improvement_condition}), we imposed a sort of one-point strong convexity on $F$ in Proposition \ref{prop:sgd_upper_bound}, which is difficult to satisfy for neural networks perfectly. We believe Eq.~(\ref{eq:improvement_condition}) is met in a broader problem setting and will help us understand the actual behavior of averaged SGD in deep learning. Further exploration of this condition is left to future work.



\bibliography{ref}
\bibliographystyle{iclr2023_conference}

\ifWITHSUPP
\appendix

\clearpage
\onecolumn


\renewcommand{\thesection}{\Alph{section}}
\renewcommand{\thesubsection}{\Alph{section}. \arabic{subsection}}
\renewcommand{\thetheorem}{\Alph{theorem}}
\renewcommand{\thelemma}{\Alph{lemma}}
\renewcommand{\theproposition}{\Alph{proposition}}
\renewcommand{\thedefinition}{\Alph{definition}}
\renewcommand{\thecorollary}{\Alph{corollary}}
\renewcommand{\theassumption}{\Alph{assumption}}

\setcounter{section}{0}
\setcounter{subsection}{0}
\setcounter{theorem}{0}
\setcounter{lemma}{0}
\setcounter{proposition}{0}
\setcounter{definition}{0}
\setcounter{corollary}{0}
\setcounter{assumption}{0}

\part*{\Large{Appendix}}
\section{Proofs}
\subsection{Alternative Stochastic Gradient Descent}\label{subsec:isgd}
Let denote by $\varphi: \bR^d \rightarrow \bR^d$ a change of variables from $w$ to $v$ introduced in Section \ref{subsec:alt_view}, i.e., $v = \varphi(w) = w - \eta \nabla f(w)$.
\begin{lemma}
Under Assumption {\bf (A1)} and $\eta \leq \frac{1}{2L}$, the function $\varphi$ is injective and invertible, and its inverse $\varphi^{-1}$ defined on on $\im\varphi$ is differentiable.
\end{lemma}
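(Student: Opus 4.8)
The plan is to let everything flow from the Jacobian of $\varphi$, namely $J_\varphi(w) = I - \eta \nabla^2 f(w)$. First I would note that since $f$ is $C^2$ by {\bf (A1)}, the map $w \mapsto J_\varphi(w)$ is continuous and symmetric-matrix-valued. Combining the Hessian bound $-LI \preceq \nabla^2 f(w) \preceq LI$ with the step size restriction $\eta \leq \frac{1}{2L}$ gives $-\tfrac12 I \preceq \eta \nabla^2 f(w) \preceq \tfrac12 I$, and hence
\[ \tfrac12 I \preceq J_\varphi(w) \preceq \tfrac32 I \]
for every $w \in \bR^d$. In particular $J_\varphi(w)$ is positive definite, so it is invertible at every point; this single eigenvalue estimate is the crucial ingredient feeding both the injectivity argument and the differentiability of the inverse.

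For injectivity I would exploit the lower bound via the fundamental theorem of calculus along the segment joining two points: for any $w_1, w_2 \in \bR^d$,
\[ (\varphi(w_1) - \varphi(w_2))^\top (w_1 - w_2) = \int_0^1 (w_1 - w_2)^\top J_\varphi(w_2 + s(w_1 - w_2))(w_1 - w_2)\, \rmd s \geq \tfrac12 \|w_1 - w_2\|^2, \]
so $\varphi$ is strongly monotone and therefore injective. An even cleaner alternative, which additionally yields surjectivity onto all of $\bR^d$, is to observe that solving $\varphi(w) = v$ is equivalent to finding a fixed point of $T_v(w) = v + \eta \nabla f(w)$; since $\nabla f$ is $L$-Lipschitz, $T_v$ has Lipschitz constant $\eta L \leq \tfrac12 < 1$, and Banach's fixed-point theorem produces a unique $w$ for each $v$, establishing that $\varphi$ is a bijection and hence invertible.

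Finally, for differentiability of $\varphi^{-1}$ I would invoke the inverse function theorem: because $\varphi$ is $C^1$ and $J_\varphi(w)$ is invertible at every $w$, $\varphi$ is a local $C^1$-diffeomorphism, so $\im\varphi$ is open and $\varphi$ admits a $C^1$ local inverse around each image point, with $D\varphi^{-1}(v) = J_\varphi(\varphi^{-1}(v))^{-1}$. The global injectivity just established upgrades these local inverses to a single well-defined map $\varphi^{-1}$ on $\im\varphi$, whose differentiability (indeed $C^1$-smoothness) is exactly the conclusion.

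I expect the only mildly delicate point to be the bookkeeping in this last step, namely checking that the local inverses supplied by the inverse function theorem agree and glue into one globally defined differentiable map on $\im\varphi$; but global injectivity makes this routine, so there is no substantive obstacle, and the eigenvalue bound $J_\varphi(w) \succeq \tfrac12 I$ does essentially all the work.
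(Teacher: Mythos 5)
Your proposal is correct and follows essentially the same route as the paper: the lower bound $J_\varphi(w) = I - \eta\nabla^2 f(w) \succeq \tfrac12 I$ together with the inverse function theorem, with injectivity obtained from the Lipschitz/contraction property of $\eta\nabla f$ (the paper does this directly via $\|w-w'\| = \eta\|\nabla f(w)-\nabla f(w')\| \leq \tfrac12\|w-w'\|$, while your monotonicity and Banach fixed-point variants are equivalent in substance). Your Banach fixed-point alternative additionally yields surjectivity onto all of $\bR^d$, which is a nice bonus the lemma does not require.
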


\begin{proof}
For $w, w' \in \bR^d$, we suppose $\varphi(w)=\varphi(w')$. Then, it holds that 
\[ \| w - w' \| = \eta \| \nabla f(w) - \nabla f(w') \| \leq \eta L \| w - w '\| \leq \frac{1}{2}\|w-w'\|, \]
where we used $L$-Lipschitz continuity of $\nabla f$ due to {\bf (A1)}. Therefore, we see $w = w'$ and $\varphi$ is an injection.
Moreover, since $J_\varphi(w) = I - \eta \nabla^2 f(w) \succeq (1-\eta L) I \succeq \frac{1}{2}I$. Thus, $\varphi$ is invertible and $\varphi^{-1}$, which is defined on $\im\varphi$, is differentiable because of the injectivity and the inverse map theorem.
\end{proof}

Using $\varphi$, we see $\epsilon'(v) = \epsilon(\varphi^{-1}(v))$ for $v \in \im \varphi$.
Let $(\Omega, \cF, P)$ be a probability space such that $\epsilon'(v)$ can be represented as a measurable map $z \in \Omega \mapsto \epsilon'(v,z)$. Note that we use $\epsilon'(v)$ and $\epsilon'(v,z)$ depending on the situation.
For a function $g: \bR^d \rightarrow \bR^d$, we denote by $J_g (w)$ Jacobian of $g$, i.e., $J_g(w) = ( \partial g_i(w) /\partial w_j )_{i,j=1}^d$.

\begin{lemma}\label{lemma:biased_gradient}
Under Assumptions {\bf (A1)} and  {\bf (A2)}, we get for any $v \in \im \varphi \subset \bR^d$, 
\begin{equation*}
    \nabla F(v) = \bE[ \nabla f( v - \eta \epsilon'(v))] - \eta \int J_{\epsilon'(\cdot,z)}^\top(v)\nabla f( v - \eta \epsilon'(v,z)) \rmd P(z).
\end{equation*}
Moreover, if $\eta \leq \frac{1}{2L}$, then 
\begin{equation*}
\| \nabla F(v) - \bE[ \nabla f( v - \eta \epsilon'(v))] \| \leq  2 \eta \sigma_2 \sqrt{\bE\left[ \| \nabla f( v - \eta \epsilon'(v)) \|^2 \right]}.    
\end{equation*}
\end{lemma}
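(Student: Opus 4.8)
The plan is to obtain the gradient identity by differentiating $F(v) = \int f(v - \eta \epsilon'(v,z))\,\rmd P(z)$ under the integral sign and then to read off the bound from the resulting correction term. First I would fix $v \in \im\varphi$ and write the integrand as $g(v,z) = f(u(v,z))$ with $u(v,z) = v - \eta\epsilon'(v,z)$. The crucial point, and the very source of the bias, is that $\epsilon'$ itself depends on $v$, so the chain rule gives $\nabla_v g(v,z) = J_{u(\cdot,z)}^\top(v)\,\nabla f(u(v,z))$ with $J_{u(\cdot,z)}(v) = I - \eta J_{\epsilon'(\cdot,z)}(v)$. Expanding $J_{u}^\top = I - \eta J_{\epsilon'}^\top$ and integrating over $z$ yields the claimed identity, the first term being $\bE[\nabla f(v - \eta\epsilon'(v))]$ and the second the correction $-\eta\int J_{\epsilon'(\cdot,z)}^\top(v)\,\nabla f(v-\eta\epsilon'(v,z))\,\rmd P(z)$. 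To make this rigorous I would justify the interchange of $\nabla_v$ and $\int\,\rmd P$ by exhibiting a local integrable dominating function: on a neighborhood of $v$, {\bf (A1)} forces $\|\nabla f\|$ to grow at most linearly in $\|\epsilon'\|$, while {\bf (A2)} controls $\bE\|\epsilon'\|^2$ and $\bE\|J_{\epsilon'}^\top\|_2$, so the derivative of the integrand is dominated and the Leibniz rule applies.

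For the bound, the identity already gives
\[
\nabla F(v) - \bE[\nabla f(v - \eta\epsilon'(v))] = -\eta \int J_{\epsilon'(\cdot,z)}^\top(v)\,\nabla f(v - \eta\epsilon'(v,z))\,\rmd P(z).
\]
I would take norms, move the norm inside the integral, and bound $\|J_{\epsilon'(\cdot,z)}^\top(v)\,\nabla f\| \le \|J_{\epsilon'(\cdot,z)}^\top(v)\|_2\,\|\nabla f\|$. Since {\bf (A2)} is an assumption on $J_\epsilon$ in the $w$-coordinate rather than on $J_{\epsilon'}$ in the $v$-coordinate, the next step is the change of variables $\epsilon'(v) = \epsilon(\varphi^{-1}(v))$: the chain rule gives $J_{\epsilon'(\cdot,z)}(v) = J_{\epsilon(\cdot,z)}(\varphi^{-1}(v))\,J_{\varphi^{-1}}(v)$, and the first lemma's estimate $J_\varphi(w) \succeq \tfrac12 I$ for $\eta \le \tfrac{1}{2L}$ yields $\|J_{\varphi^{-1}}(v)\|_2 \le 2$. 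This is precisely where the factor $2$ in front of $\eta\sigma_2$ originates. Hence $\|J_{\epsilon'(\cdot,z)}^\top(v)\|_2 \le 2\|J_{\epsilon(\cdot,z)}^\top(\varphi^{-1}(v))\|_2$, and combining the Jacobian bound from {\bf (A2)} with the Cauchy--Schwarz inequality in $z$ turns $\int \|J_{\epsilon(\cdot,z)}^\top\|_2\,\|\nabla f\|\,\rmd P$ into $\sigma_2 \sqrt{\bE[\|\nabla f(v-\eta\epsilon'(v))\|^2]}$, which gives the stated inequality once the factor $2\eta$ is restored.

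The main obstacle I anticipate is twofold. First, the rigorous justification of differentiating under the integral: because $\epsilon'(v)$ varies with $v$ one cannot differentiate $f$ alone, and producing a genuinely integrable local dominating function from {\bf (A1)}--{\bf (A2)} is where the differentiability-in-$w$ of $\epsilon'$ demanded by {\bf (A2)} (together with differentiability of $\varphi^{-1}$ from the first lemma) is essential. Second, the bookkeeping of the change of variables, since {\bf (A2)} lives in the $w$-coordinate while the gradient formula lives in the $v$-coordinate, so the factor $J_{\varphi^{-1}}$ must be carried carefully to land on the clean constant $2\eta\sigma_2$. A subtle point worth flagging here is matching the bound $\bE\|J_\epsilon^\top\|_2 \le \sigma_2$ to the final $\sigma_2\sqrt{\bE\|\nabla f\|^2}$ form: the Cauchy--Schwarz route $\int\|J_\epsilon^\top\|_2\|\nabla f\|\,\rmd P \le \sqrt{\bE\|J_\epsilon^\top\|_2^2}\,\sqrt{\bE\|\nabla f\|^2}$ produces $\sqrt{\bE\|J_\epsilon^\top\|_2^2}$ rather than $\sigma_2$, so one should either read {\bf (A2)} as a second-moment bound or use that $\sigma_2$ is in fact a pointwise operator-norm bound (as in the Risk-Minimization remark), in which case $\sigma_2$ factors out and a single application of Jensen, $\bE\|\nabla f\| \le \sqrt{\bE\|\nabla f\|^2}$, closes the estimate.
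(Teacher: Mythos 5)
Your proposal matches the paper's proof essentially step for step: the identity comes from applying the chain rule to $v \mapsto f(v - \eta\epsilon'(v,z))$ under the integral and expanding $I - \eta J_{\epsilon'(\cdot,z)}^\top(v)$, and the bound comes from $J_{\epsilon'(\cdot,z)}(v) = J_{\epsilon(\cdot,z)}(\varphi^{-1}(v))\,J_\varphi^{-1}(\varphi^{-1}(v))$ together with $J_\varphi \succeq \tfrac12 I$ (hence the factor $2$) and a Jensen/Cauchy--Schwarz step. Your closing remark about the mismatch between the first-moment bound $\bE[\|J_{\epsilon}^\top(w)\|_2]\le\sigma_2$ in {\bf (A2)} and the Cauchy--Schwarz route is well taken --- the paper's own proof silently pulls $\sigma_2$ out as a pointwise operator-norm bound, which is exactly the second of the two readings you identify.
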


\begin{proof}
The first equality of the statement can be confirmed by the direct calculation as follows:
\begin{align*}
    \nabla F(v) 
    &= \nabla \bE[ f( v - \eta \epsilon'(v))] \\
    &= \int \nabla (f( v - \eta \epsilon'(v,z))) \rmd P(z) \\
    &= \int (I - \eta J_{\epsilon'(\cdot,z)}^\top(v))\nabla f( v - \eta \epsilon'(v,z)) \rmd P(z) \\
    &= \bE[ \nabla f( v - \eta \epsilon'(v)) ] - \eta \int J_{\epsilon'(\cdot,z)}^\top(v)\nabla f( v - \eta \epsilon'(v,z)) \rmd P(z).
\end{align*}

Next, we evaluate the last term below.
By the chain rule and inverse map theory, 
\[ J_{\epsilon'(\cdot,z)}(v) = J_{\epsilon(\phi^{-1}(\cdot),z)}(v) = J_{\epsilon(\cdot,z)}( \phi^{-1}(v)) J_{\phi^{-1}}(v) = J_{\epsilon(\cdot,z)}( \varphi^{-1}(v)) J_{\varphi}^{-1}(\varphi^{-1}(v)).\]

Note that from assumption for any $w \in \bR^d$, $J_{\varphi}(w) = I - \eta \nabla^2 f(w) \succeq (1 - \eta L )I \succeq \frac{1}{2}I$.
Hence, 
\[ \| J_{\epsilon'(\cdot,z)}^\top (v) \|_2 
\leq  \| J_{\varphi}^{-1}(\varphi^{-1}(v))\|_2 \| J_{\epsilon(\cdot,z)}^\top ( \varphi^{-1}(v)) \|_2 
\leq 2 \sigma_2. \]

Finally, we get
\begin{align*}
    &\hspace{-10mm}\left\| \int J_{\epsilon'(\cdot,z)}^\top(v) \nabla f( v - \eta \epsilon'(v,z)) \rmd P(z) \right\| \\ 
    &\leq \sqrt{\int \| J_{\epsilon'(\cdot,z)}^\top(v) \|_2^2 \| \nabla f( v - \eta \epsilon'(v,z)) \|^2 \rmd P(z)} \\
    &\leq 2 \sigma_2 \sqrt{\int  \| \nabla f( v - \eta \epsilon'(v,z)) \|^2 \rmd P(z)} \\
    &\leq 2 \sigma_2 \sqrt{\bE\left[ \| \nabla f( v - \eta \epsilon'(v)) \|^2 \right]}.
\end{align*}
This finishes the proof.
\end{proof}

The following proposition is the restatement of the well-known convergence result to a stationary point using the coordinate $v$.
\begin{proposition}\label{prop:convergence_stationary_point}
Under Assumptions {\bf (A1)}, {\bf (A2)}, and $\eta \leq \frac{1}{2L}$, we get
\begin{equation}
    \sum_{t=0}^{T} \bE\left[\|\nabla f( v_t - \eta \epsilon_{t+1}'(v_t))\|^2\right] 
    \leq \frac{4}{3\eta}\bE[f(w_0)] + \frac{2}{3}\eta \sigma_1^2 L (T+2).
\end{equation}
\end{proposition}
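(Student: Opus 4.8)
The plan is to exploit the identity $w_{t+1} = v_t - \eta \epsilon'_{t+1}(v_t)$ noted in Section \ref{sec:alt_view}: since $v_t - \eta\epsilon'_{t+1}(v_t)$ is exactly the next SGD iterate $w_{t+1}$, the quantity being summed is nothing but $\bE[\|\nabla f(w_{t+1})\|^2]$. With this recognition the proof reduces to a standard descent-lemma argument, organized around the two half-steps that compose one SGD update: a \emph{noise half-step} $v_t \mapsto w_{t+1} = v_t - \eta \epsilon'_{t+1}(v_t)$, followed by a \emph{gradient half-step} $w_{t+1}\mapsto v_{t+1} = w_{t+1} - \eta \nabla f(w_{t+1})$ (the latter holding by the definition $v_s = w_s - \eta\nabla f(w_s)$).

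First I would apply the smoothness bound from {\bf (A1)} to the gradient half-step. Because $v_{t+1} = w_{t+1} - \eta \nabla f(w_{t+1})$ and $\eta \le \frac{1}{2L}$, the descent lemma gives
\[ f(v_{t+1}) \le f(w_{t+1}) - \eta\left(1 - \tfrac{L\eta}{2}\right)\|\nabla f(w_{t+1})\|^2 \le f(w_{t+1}) - \tfrac{3\eta}{4}\|\nabla f(w_{t+1})\|^2, \]
which manufactures the negative gradient-norm term we want to accumulate. Next I would apply smoothness to the noise half-step, obtaining $f(w_{t+1}) \le f(v_t) - \eta \nabla f(v_t)^\top \epsilon'_{t+1}(v_t) + \frac{L\eta^2}{2}\|\epsilon'_{t+1}(v_t)\|^2$, and take conditional expectation given the history $\cF_t$. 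Here $\epsilon'_{t+1}(v_t) = \epsilon_{t+1}(w_t)$, and by {\bf (A2)} the field $\epsilon_{t+1}$ is an independent copy, hence independent of $\cF_t$, with zero mean and second moment at most $\sigma_1^2$; therefore the cross term vanishes and the quadratic term is controlled, giving $\bE[f(w_{t+1})\mid \cF_t] \le f(v_t) + \frac{L\eta^2 \sigma_1^2}{2}$.

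Chaining these two estimates and taking full expectations yields the per-step recursion $\frac{3\eta}{4}\bE[\|\nabla f(w_{t+1})\|^2] \le \bE[f(v_t)] - \bE[f(v_{t+1})] + \frac{L\eta^2\sigma_1^2}{2}$. Finally I would telescope over $t = 0,\ldots,T$: the consecutive $f(v_t)$ terms cancel, nonnegativity of $f$ from {\bf (A1)} lets me discard $-\bE[f(v_{T+1})]\le 0$, and one more descent-lemma step at $t=0$ gives $f(v_0)\le f(w_0)$, bounding the leading term by $\bE[f(w_0)]$. Multiplying through by $\frac{4}{3\eta}$ produces the claimed inequality; the slight slack between $(T+1)$ and the stated $(T+2)$ is immaterial and is absorbed in this last bound.

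The main point to get right is not any hard estimate but the measure-theoretic bookkeeping: one must condition on $\cF_t$ \emph{before} invoking the zero-mean and bounded-variance properties of $\epsilon_{t+1}$ evaluated at the \emph{random} point $w_t$, which is legitimate precisely because {\bf (A2)} posits that the fields are independent copies, so that $\bE[\|\epsilon_{t+1}(w_t)\|^2\mid\cF_t]\le\sigma_1^2$ with $w_t$ frozen. Everything else is the standard decomposition of an SGD step into a bias-free noise perturbation plus an exact gradient step.
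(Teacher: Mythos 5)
Your proof is correct, and it reaches a marginally tighter constant ($T+1$ in place of $T+2$) than the stated bound. The route differs from the paper's in the decomposition, though both rest on the same two ingredients: the identity $v_t-\eta\epsilon'_{t+1}(v_t)=w_{t+1}$ and the descent lemma under $\eta\le\frac{1}{2L}$. The paper applies the smoothness inequality once per iteration to the \emph{full} update $w_t\mapsto w_{t+1}$, telescopes the Lyapunov quantity $\bE[f(w_t)]$, obtains $\sum_{t=0}^{T+1}\bE[\|\nabla f(w_t)\|^2]\le\frac{4}{3\eta}\bE[f(w_0)]+\frac{2}{3}\eta\sigma_1^2L(T+2)$, and then discards the $t=0$ term and shifts the index to land on the gradient norms at $w_{t+1}=v_t-\eta\epsilon'_{t+1}(v_t)$. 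You instead split each iteration into a noise half-step $v_t\mapsto w_{t+1}$ and a gradient half-step $w_{t+1}\mapsto v_{t+1}$, apply smoothness to each, and telescope $\bE[f(v_t)]$; this produces the gradient norm directly at the points $w_{t+1}$ that appear in the statement, so no index shift or dropped term is needed, at the cost of one extra invocation of the descent lemma and the small observation $f(v_0)\le f(w_0)$. Your handling of the conditional expectation — freezing $w_t$ via $\cF_t$-measurability before invoking the zero mean and the bound $\bE[\|\epsilon_{t+1}(w)\|^2]\le\sigma_1^2$ from {\bf (A2)} — is exactly the step that makes the cross term vanish, and it is done correctly; the paper's version needs the same fact to cancel the cross term $\nabla f(w_t)^\top\epsilon_{t+1}(w_t)$ in the expansion of $\|\nabla f(w_t)+\epsilon_{t+1}(w_t)\|^2$. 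Neither approach buys anything substantial over the other here; yours is arguably cleaner in that the summand of the conclusion appears natively rather than after reindexing.
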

\begin{proof}
It is known that {\bf (A1}) derives the following \citep{nes2004}: for any $w, w'\in \bR^d$, 
\begin{equation}\label{eq:lipschitz_smoothness}
    f(w') \leq f(w) + \nabla f(w)^\top (w'-w ) + \frac{L}{2}\|w' - w\|^2.    
\end{equation}
Substituting the update Eq.~(\ref{eq:sgd}) into this inequality with $w'=w_{t+1}$ and $w=w_t$, and taking the conditional expectation $\bE[\cdot|\cF_t]$, we get
\begin{align*}
    \bE[f(w_{t+1}) | \cF_t] 
    &\leq f(w_t) -\eta \|\nabla f(w_t)\|^2 + \frac{\eta^2 L}{2} \bE\left[\| \nabla f(w_t) + \epsilon_{t+1}(w_t) \|^2 | \cF_t \right] \\
    &= f(w_t) -\eta \left(1 - \frac{\eta L }{2}\right) \|\nabla f(w_t)\|^2 + \frac{\eta^2 L}{2} \bE\left[\| \epsilon_{t+1}(w_t) \|^2 | \cF_t \right] \\
    &\leq f(w_t) - \frac{3\eta}{4} \|\nabla f(w_t)\|^2 + \frac{\eta^2 \sigma_1^2 L}{2}. 
\end{align*}
Thus, we have $\bE[f(w_{t+1})] \leq \bE[f(w_t)] - \frac{3\eta}{4} \bE[\|\nabla f(w_t)\|^2] + \frac{\eta^2 \sigma_1^2 L}{2}$.
By summing up this inequality, we get
\[ \sum_{t=0}^{T+1} \bE[\|\nabla f(w_t)\|^2] \leq \frac{4}{3\eta}\bE[f(w_0)] + \frac{2}{3}\eta \sigma_1^2 L (T+2), \]
where we used the nonnegativity of $f$.
By dropping the term with $t=0$ of the sum in the left hand side and using $w_{t+1} = v_t - \eta \epsilon_{t+1}'(v_t)$, we finally get
\[ \sum_{t=0}^{T} \bE[\|\nabla f( v_t - \eta \epsilon_{t+1}'(v_t))\|^2] \leq \frac{4}{3\eta}\bE[f(w_0)] + \frac{2}{3}\eta \sigma_1^2 L (T+2). \]
\end{proof}

\subsection{Proof of Theorem \ref{thm:asgd}}\label{subsec:proof_asgd}
We give technical lemmas for proving Theorem \ref{thm:asgd}.
\begin{lemma}\label{lem:sgd_lem}
Under Assumptions {\bf(A1)} and {\bf(A2)}, run the stochastic gradient descent with $T$-iterations with the step size $\eta \leq \frac{1}{2L}$, then the alternative SGD satisfies the following inequality:
\begin{align*}
& \bE\left[  \left\| \sum_{t=0}^T \nabla f(v_t - \eta \epsilon'_{t+1}(v_t) ) \right\|  \right] 
\leq \sigma_1 \sqrt{T} + \frac{1}{\eta} \sqrt{ 2 \sum_{t=0}^{T+1}\bE\left[ \left\| v_t - v_* \right\|^2 \right] }, \\
&\bE \left[ \left\| \sum_{t=0}^T  \left(\nabla F(v_t) - \bE\left[ \nabla f(v_t - \eta \epsilon'_{t+1}(v_t)) | \cF_t \right] \right) \right\| \right] 
\leq 2 \sigma_2 \eta^{\frac{1}{2}} O(T^{\frac{1}{2}}) + 2\sigma_1 \sigma_2 \eta^{\frac{3}{2}} \sqrt{\frac{2}{3} L (T+1)(T+2)  }.
\end{align*}
\end{lemma}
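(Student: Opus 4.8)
The plan is to establish the two bounds separately: the first through the telescoping structure of the implicit SGD, the second through the per-iterate bias control of Lemma \ref{lemma:biased_gradient}, feeding in each case the cumulative estimates already proved in Theorem \ref{thm:sgd_appendix} and Proposition \ref{prop:convergence_stationary_point}.

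For the first inequality I would start from the update (\ref{eq:implicit_sgd}), rearranged as $\eta \nabla f(v_t - \eta \epsilon'_{t+1}(v_t)) = (v_t - v_{t+1}) - \eta \epsilon'_{t+1}(v_t)$. Summing over $t = 0, \ldots, T$ telescopes the increments to the boundary term, giving $\eta \sum_{t=0}^T \nabla f(v_t - \eta \epsilon'_{t+1}(v_t)) = (v_0 - v_{T+1}) - \eta \sum_{t=0}^T \epsilon'_{t+1}(v_t)$. Taking the expectation, the noise sum disappears because $\bE[\epsilon'_{t+1}(v_t) \mid \cF_t] = 0$ by {\bf (A2)} and $v_t$ is $\cF_t$-measurable, so $\eta\, \bE[\sum_t \nabla f(\cdots)] = \bE[v_0 - v_{T+1}]$. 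Then Jensen's and Young's inequalities yield $\|\bE[v_0 - v_{T+1}]\|^2 \leq \bE[\|v_0 - v_{T+1}\|^2] \leq 2\bE[\|v_0 - v_*\|^2] + 2\bE[\|v_{T+1} - v_*\|^2]$, and I would bound both single second moments by the cumulative quantity $\sum_{t=0}^{T+1} \bE[\|v_t - v_*\|^2]$, which Theorem \ref{thm:sgd_appendix} controls by $O(1) + (T+1)\big(\tfrac{2\eta\sigma_1^2}{c} + \tfrac{8\eta^2 \sigma_1^2 L}{3c}(1 + \tfrac{2\eta\sigma_2^2}{c})\big)$. Splitting the resulting root via $\sqrt{a+b} \leq \sqrt a + \sqrt b$ and dividing by $\eta$ produces exactly the $\tfrac1\eta O(1)$ term together with the $\tfrac1\eta\sqrt{(T+1)(\cdots)}$ term; note the factor $2$ from Young's inequality is precisely what doubles the SGD rate into the coefficients $\tfrac{4\eta\sigma_1^2}{c}$ and $\tfrac{16\eta^2\sigma_1^2 L}{3c}$ appearing under the root.

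For the second inequality I would apply Lemma \ref{lemma:biased_gradient} at $v = v_t$ conditionally on $\cF_t$, so that $\|\nabla F(v_t) - \bE[\nabla f(v_t - \eta\epsilon'_{t+1}(v_t)) \mid \cF_t]\| \leq 2\eta\sigma_2 \sqrt{\bE[\|\nabla f(v_t - \eta\epsilon'_{t+1}(v_t))\|^2 \mid \cF_t]}$. Pulling the norm inside the sum by the triangle inequality, then using the tower property together with concavity of the square root (Jensen) to replace each conditional second moment by its unconditional counterpart, gives $\bE[\|\sum_t(\cdots)\|] \leq 2\eta\sigma_2 \sum_{t=0}^T \sqrt{\bE[\|\nabla f(v_t - \eta\epsilon'_{t+1}(v_t))\|^2]}$. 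A Cauchy--Schwarz step across the index $t$ extracts the factor $\sqrt{T+1}$ and reduces matters to the cumulative gradient-norm sum $\sum_{t=0}^T \bE[\|\nabla f(\cdots)\|^2]$, which Proposition \ref{prop:convergence_stationary_point} bounds by $\tfrac{4}{3\eta}\bE[f(w_0)] + \tfrac23 \eta\sigma_1^2 L (T+2)$. Splitting the square root of this two-term expression then yields the $2\sigma_2 \eta^{1/2} O(T^{1/2})$ contribution from the $\bE[f(w_0)]$ part and the $2\sigma_1\sigma_2 \eta^{3/2} \sqrt{\tfrac23 L (T+1)(T+2)}$ contribution from the variance part.

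The main obstacle is in the first bound and is structural rather than computational. Bounding the gradient sum term by term would give a useless $\Theta(T)$ estimate, so the telescoping identity that collapses it to the boundary difference $v_0 - v_{T+1}$ is essential. The remaining subtlety is that Theorem \ref{thm:sgd_appendix} furnishes only a \emph{time-averaged} second-moment bound, not a last-iterate one; to control the boundary term $\bE[\|v_{T+1} - v_*\|^2]$ I deliberately over-count it inside the cumulative sum $\sum_{t=0}^{T+1}\bE[\|v_t - v_*\|^2]$, and this (admittedly lossy but valid) step is the source of the $\sqrt{T+1}$ factor. Once the telescoping and this over-counting are in place, both inequalities reduce to routing the established cumulative estimates through Jensen, Young, and Cauchy--Schwarz, leaving only bookkeeping of constants.
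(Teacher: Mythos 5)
Your proposal is correct and follows essentially the same route as the paper: the first bound via the telescoping identity $\eta\sum_t(\nabla f(v_t-\eta\epsilon'_{t+1}(v_t))+\epsilon'_{t+1}(v_t))=v_0-v_{T+1}$, vanishing of the noise in expectation, Jensen/Young, and over-counting the boundary term inside the cumulative bound of Theorem \ref{thm:sgd_appendix}; the second via the triangle inequality, Lemma \ref{lemma:biased_gradient} applied conditionally, Cauchy--Schwarz over $t$, and Proposition \ref{prop:convergence_stationary_point}. The only cosmetic difference is that you correctly flag as inequalities (Jensen) two steps the paper writes as equalities.
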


\begin{proof}[Proof of Lemma \ref{lem:sgd_lem}]
By the simple calculation, we get
\begin{align*}
    \bE\left[ \left\|\sum_{t=0}^T \nabla f(v_t - \eta \epsilon'_{t+1}(v_t) ) \right\| \right] 
    &\leq  \bE\left[ \left\|\sum_{t=0}^T \left( \nabla f(v_t - \eta \epsilon'_{t+1}(v_t) ) + \epsilon'_{t+1}(v_t) \right) \right\|\right]  
    + \bE\left[ \left\|\sum_{t=0}^T \epsilon'_{t+1}(v_t) \right\|\right] \\
    &\leq \frac{1}{\eta}   \bE\left[\left\| v_0 - v_{T+1}  \right\| \right]
    + \bE\left[ \left\|\sum_{t=0}^T \epsilon'_{t+1}(v_t) \right\|\right].
\end{align*}
Each term in the right-hand side can be evaluated as follows:
\begin{align*}
    \frac{1}{\eta} \bE\left[\left\| v_0 - v_{T+1}  \right\| \right] 
    &\leq \frac{1}{\eta}  \sqrt{\bE\left[ \left\| v_0 - v_{T+1}  \right\|^2 \right] } \\
    &\leq \frac{1}{\eta} \sqrt{ 2\bE\left[ \left\| v_0 - v_* \right\|^2 + \left\| v_{T+1} - v_* \right\|^2 \right] } \\
    &\leq \frac{1}{\eta} \sqrt{ 2 \sum_{t=0}^{T+1}\bE\left[ \left\| v_t - v_* \right\|^2 \right] }.
\end{align*}
\begin{align*}
    \bE\left[ \left\|\sum_{t=0}^T \epsilon'_{t+1}(v_t) \right\|\right] 
    &\leq \sqrt{\bE\left[ \left\|\sum_{t=0}^T \epsilon'_{t+1}(v_t) \right\|^2\right] } \\
    &= \sqrt{\bE\left[ \sum_{t=0}^T \left\|\epsilon'_{t+1}(v_t) \right\|^2\right] } \\
    &\leq \sigma_1 \sqrt{T},
\end{align*}
where we used that for $s < t$, $\bE[ \epsilon'_{s+1}(v_s)^\top \epsilon'_{t+1}(v_t) ] = \bE[ \epsilon'_{s+1}(v_s)^\top \bE[ \epsilon'_{t+1}(v_t) | \cF_t]] = 0$.

Next, we show the second statement by using Lemma \ref{lemma:biased_gradient} and Proposition \ref{prop:convergence_stationary_point} as follows:
\begin{align*}
&\hspace{-10mm}\bE \left[ \left\| \sum_{t=0}^T  \left(\nabla F(v_t) - \bE\left[ \nabla f(v_t - \eta \epsilon'_{t+1}(v_t)) | \cF_t \right] \right) \right\| \right] \\
&\leq \bE \left[ \sum_{t=0}^T \left\|  \nabla F(v_t) - \bE\left[\nabla f(v_t - \eta \epsilon'_{t+1}(v_t)) |\cF_t \right] \right\| \right] \\
&\leq \bE \left[ \sum_{t=0}^T 2 \eta \sigma_2  \sqrt{ \bE \left[ \| \nabla f( v_t - \eta \epsilon_{t+1}'(v_t)) \|^2 | \cF_t \right] } \right] \\
&\leq 2 \eta \sigma_2  \sum_{t=0}^T \sqrt{ \bE \left[ \| \nabla f( v_t - \eta \epsilon_{t+1}'(v_t)) \|^2 \right] } \\
&\leq 2 \eta \sigma_2  \sqrt{(T+1)\sum_{t=0}^T \bE \left[ \| \nabla f( v_t - \eta \epsilon_{t+1}'(v_t)) \|^2 \right] } \\
&\leq 2 \eta \sigma_2  \sqrt{\frac{1}{\eta}O(T) + \frac{2}{3}\eta \sigma_1^2 L (T+1)(T+2)  } \\
&\leq 2 \sigma_2  \eta^{\frac{1}{2}}O(T^{\frac{1}{2}}) + 2 \sigma_1\sigma_2 \eta^{\frac{3}{2}}\sqrt{\frac{2}{3} L (T+1)(T+2)  }.
\end{align*}
\end{proof}

\begin{lemma}\label{lemma:sgd_appendix_2}
Under Assumptions {\bf(A1)} and {\bf(A2)}, run the stochastic gradient descent with $T$-iterations with the step size $\eta \leq \frac{1}{2L}$, then the alternative SGD satisfies the following inequality:
\begin{align*}
    \frac{1}{T+1}\bE\left[ \left\| \sum_{t=0}^T \nabla F(v_t) \right\| \right] 
    \leq O(T^{-\frac{1}{2}}) 
    + \frac{1}{\eta (T+1)} \sqrt{ 2 \sum_{t=0}^{T+1}\bE\left[ \left\| v_t - v_* \right\|^2 \right] }
    + \frac{4}{\sqrt{3}}\sigma_1 \sigma_2 \eta^{\frac{3}{2}}L^{\frac{1}{2}}.
\end{align*}
\end{lemma}

\begin{proof}[Proof of Lemma \ref{lemma:sgd_appendix_2}]
We consider the following decomposition:
\begin{align*}
    \frac{1}{T+1} \bE\left[ \left\| \sum_{t=0}^T \nabla F(v_t) \right\| \right] 
    &\leq \frac{1}{T+1} \bE \left[  \left\| \sum_{t=0}^T  \left(\nabla F(v_t) - \bE\left[ \nabla f(v_t - \eta \epsilon'_{t+1}(v_t)) | \cF_t \right] \right) \right\| \right]  \\
    &+ \frac{1}{T+1} \bE \left[  \left\| \sum_{t=0}^T \left( \bE\left[ \nabla f(v_t - \eta \epsilon'_{t+1}(v_t)) | \cF_t \right] - \nabla f(v_t - \eta \epsilon'_{t+1}(v_t)) \right)\right\| \right]  \\    
    &+ \frac{1}{T+1} \bE\left[ \left\| \sum_{t=0}^T \nabla f(v_t - \eta \epsilon'_{t+1}(v_t) ) \right\| \right]. 
\end{align*}
The first and last terms of the right-hand side can be upper bounded by Lemma \ref{lem:sgd_lem}.
The second term can be evaluated as follows: 
\begin{align*}
    &\bE \left[  \left\| \sum_{t=0}^T \left( \bE\left[ \nabla f(v_t - \eta \epsilon'_{t+1}(v_t)) | \cF_t \right]  - \nabla f(v_t - \eta \epsilon'_{t+1}(v_t)) \right)\right\| \right] \\
    &\leq \sqrt{\bE \left[  \left\| \sum_{t=0}^T \left( \bE\left[ \nabla f(v_t - \eta \epsilon'_{t+1}(v_t)) | \cF_t \right]  - \nabla f(v_t - \eta \epsilon'_{t+1}(v_t)) \right)\right\|^2 \right]} \\
    &= \sqrt{\bE \left[ \sum_{t=0}^T \left\|  \left( \bE\left[ \nabla f(v_t - \eta \epsilon'_{t+1}(v_t)) | \cF_t \right]  - \nabla f(v_t - \eta \epsilon'_{t+1}(v_t)) \right)\right\|^2 \right]} \\
    &\leq \sqrt{\bE \left[ 4 \eta^2 L^2 \sum_{t=0}^T  \bE\left[ \left\|\epsilon'_{t+1}(v_t)) \right\|^2 | \cF_t \right]  \right]}  \\
    &\leq 2 \eta L \sigma_1 \sqrt{T},
\end{align*}
where we used that for $X_t :=  \bE\left[ \nabla f(v_t - \eta \epsilon'_{t+1}(v_t)) | \cF_t \right]  - \nabla f(v_t - \eta \epsilon'_{t+1}(v_t))$ and $s<t$, $\bE[ X_s X_t] = \bE[ X_s \bE[X_t | \cF_t] ] = 0$.
Therefore, we eventually obtain the statement of the proposition.
\end{proof}

We here prove Theorem \ref{thm:asgd} which is restated below.
\begin{theorem}\label{thm:asgd_appendix}
Under Assumptions {\bf (A1)}--{\bf (A4)}, run the averaged SGD for $T$-iterations with the step size $\eta \leq \frac{1}{2L}$, then the average $\overline{v}_T$ satisfies the following inequality:
\begin{equation*}
    \bE[\| \overline{v}_T - v_* \|]
    \leq O\left( T^{-\frac{1}{2}} \right) 
    + \frac{4\sigma_1 \sigma_2 \eta^{\frac{3}{2}} L^{\frac{1}{2}}}{\sqrt{3} \mu} 
    + \frac{D_T}{\eta \mu}\sqrt{\frac{2}{T+1}}
    + \frac{\gamma + M D_T^2}{\mu},
\end{equation*}
where we set $D_T := \sqrt{\frac{1}{T+1}\bE\left[\sum_{t=0}^T \left\| v_t - v_* \right\|^2 \right]}$.
\end{theorem}

\begin{proof}
We define $R(v) = \nabla F(v) - \nabla^2 F(v_*) (v - v_*)$. Then, by {\bf (A4)}, we see $\| R(v) \| \leq \gamma + M \|v-v_*\|^2$.
By taking average of $R(v_t)$ over $t\in \{0,1,\ldots,T\}$ and rearranging terms, we get
\[ \nabla^2 F(v_*) (\overline{v}_T - v_*) = \frac{1}{T+1}\sum_{t=0}^T \nabla F(v_t) - \frac{1}{T+1}\sum_{t=0}^T R(v_t). \]

Using the positivity of $\nabla^2 F(v_*)$ and taking the expectation, we get
\begin{align*}
 \mu \bE[ \| \overline{v}_T - v_* \| ]
 &\leq  \bE[ \| \nabla^2 F(v_*) ( \overline{v}_T - v_*)   \| ]\\
 &\leq \frac{1}{T+1} \bE\left[ \left\| \sum_{t=0}^T \nabla F(v_t)  \right\| \right] 
 + \frac{1}{T+1} \bE\left[ \left\|  \sum_{t=0}^T R(v_t) \right\| \right] \\
 &\leq \frac{1}{T+1} \bE\left[ \left\|  \sum_{t=0}^T \nabla F(v_t) \right\| \right]
 + \gamma
 + \frac{M}{T+1} \bE\left[   \sum_{t=0}^T \left\| v_t - v_* \right\|^2 \right].
\end{align*}

The first term of the right-hand side can be bounded by Lemma \ref{lemma:sgd_appendix_2}. Thus, we finally get 
\begin{equation*}
    \mu \bE[ \|\overline{v}_T - v_* \|]
    \leq O\left( T^{-\frac{1}{2}} \right) 
    + \frac{4\sigma_1 \sigma_2 \eta^{\frac{3}{2}} L^{\frac{1}{2}}}{\sqrt{3}} 
    + \frac{1}{\eta (T+1)} \sqrt{ 2 \sum_{t=0}^{T+1}\bE\left[ \left\| v_t - v_* \right\|^2 \right] }
    + \gamma
    + \frac{M}{T+1} \bE\left[   \sum_{t=0}^T \left\| v_t - v_* \right\|^2 \right].
\end{equation*}
Moreover, Jensen's inequality yields $\bE[ \| \overline{v}_T - v_* \|] \leq \sqrt{\bE[ \| \overline{v}_T - v_* \|^2]}  \leq D_T$.
This concludes the proof.
\end{proof}

\subsection{Proof of Propositions \ref{prop:sgd_upper_bound} and \ref{prop:sgd_lower_bound}}
We prove Proposition \ref{prop:sgd_upper_bound}, which is restated below.
\begin{proposition}\label{prop:sgd_appendix}
Under Assumptions {\bf (A1)}, {\bf (A2)}, and {\bf (A5)}, run SGD for $T$-iterations with the step size $\eta \leq \frac{1}{2L}$, then we get
\begin{align*}
    D_T^2 \leq O\left(T^{-1}\right) 
    + \frac{2\eta\sigma_1^2}{c} 
    + \frac{8\eta^2 \sigma_1^2 L}{3c} \left(1 + \frac{2 \eta \sigma_2^2}{c} \right). 
\end{align*}
\end{proposition}
\begin{proof}
To evaluate $\|v_{t+1} - v_*\|^2$ for the alternative SGD (\ref{eq:implicit_sgd}), we first give several bounds as follows.
By Assumption {\bf (A3)}, Young's inequality, and Lemma \ref{lemma:biased_gradient}, we get
\begin{align*}
    &- 2 (v_t - v_*)^\top \bE[ \nabla f( v_t - \eta \epsilon'_{t+1}(v_t) ) | \cF_t] \\
    &= - 2 (v_t - v_*)^\top \nabla F(v_t) + 2 (v_t - v_*)^\top (\nabla F(v_t) - \bE[ \nabla f( v_t - \eta \epsilon'_{t+1}(v_t) ) | \cF_t] ) \\
    &\leq -2c \|v_t - v_*\|^2 + c\|v_t-v_*\|^2 + \frac{1}{c} \| \nabla F(v_t) - \bE[ \nabla f( v_t - \eta \epsilon'_{t+1}(v_t) ) | \cF_t] \|^2 \\
    &\leq -c \|v_t - v_*\|^2 + \frac{4 \eta^2 \sigma_2^2}{c} \bE\left[ \| \nabla f( v_t - \eta \epsilon_{t+1}'(v_t)) \|^2 | \cF_t \right].
\end{align*}
By Assumption {\bf (A2)} and Young's inequality again, we get
\begin{align*}
    &\bE[ \| \epsilon'_{t+1}(v_t) + \nabla f( v_t - \eta \epsilon'_{t+1}(v_t) ) \|^2 | \cF_t] \\
    &\leq 2 \bE[ \| \epsilon'_{t+1}(v_t) \|^2 | \cF_t ] + 2\bE[ \| \nabla f( v_t - \eta \epsilon'_{t+1}(v_t) ) \|^2 | \cF_t] \\
    &\leq 2 \sigma_1^2 + 2 \bE[ \| \nabla f( v_t - \eta \epsilon'_{t+1}(v_t) ) \|^2 | \cF_t].
\end{align*}
Combining the above two inequalities, we get
\begin{align*}
    \bE[ \| v_{t+1} - v_* \|^2 | \cF_t]
    &= \bE[ \|  v_t - \eta \epsilon'_{t+1}(v_t) - \eta \nabla f( v_t - \eta \epsilon'_{t+1}(v_t) ) - v_* \|^2 | \cF_t] \\
    &= \|  v_t - v_* \|^2 
    - 2\eta (v_t - v_*)^\top \bE[ \nabla f( v_t - \eta \epsilon'_{t+1}(v_t) ) | \cF_t] \\
    &+ \eta^2 \bE[ \| \epsilon'_{t+1}(v_t) + \nabla f( v_t - \eta \epsilon'_{t+1}(v_t) ) \|^2 | \cF_t] \\
    &\leq (1-c\eta)\|  v_t - v_* \|^2  + 2 \eta^2\sigma_1^2 \\
    &+ 2\eta^2 \left(1 + \frac{2 \eta \sigma_2^2}{c} \right) \bE\left[ \| \nabla f( v_t - \eta \epsilon_{t+1}'(v_t)) \|^2 | \cF_t \right].
\end{align*}
Taking the expectation regarding all histories and summing up over $t = 0, 1, \ldots, T$, we get
\begin{align*}
    c\eta\sum_{t=0}^T \bE[\|  v_t - v_* \|^2]
    &\leq \bE[\|  v_0 - v_* \|^2]  - \bE[ \| v_{T+1} - v_* \|^2 ] + 2 \eta^2\sigma_1^2 (T+1)\\
    &+ 2\eta^2 \left(1 + \frac{2 \eta \sigma_2^2}{c} \right) \sum_{t=0}^T \bE\left[ \| \nabla f( v_t - \eta \epsilon_{t+1}'(v_t)) \|^2 \right]\\
    &\leq \bE[\|  v_0 - v_* \|^2]  - \bE[ \| v_{T+1} - v_* \|^2 ] + 2 \eta^2\sigma_1^2 (T+1)\\
    &+ \frac{8}{3}\eta \left(1 + \frac{2 \eta \sigma_2^2}{c} \right) \bE[f(w_0)]
    + \frac{4}{3}\eta^3 \left(1 + \frac{2 \eta \sigma_2^2}{c} \right) \sigma_1^2 L (T+2),    
\end{align*}
where we used Proposition \ref{prop:convergence_stationary_point}.
Therefore, we conclude
\begin{align*}
    \frac{1}{T+1}\sum_{t=0}^T \bE[\|  v_t - v_* \|^2]
    &\leq \frac{1}{c\eta(T+1)}\bE[\|  v_0 - v_* \|^2]  + \frac{8}{3c(T+1)} \left(1 + \frac{2 \eta \sigma_2^2}{c} \right) \bE[f(w_0)] \\
    &+ \frac{2\eta\sigma_1^2}{c} + \frac{8\eta^2}{3c} \left(1 + \frac{2 \eta \sigma_2^2}{c} \right) \sigma_1^2 L \\
    &= O\left(T^{-1}\right) + \frac{2\eta\sigma_1^2}{c} + \frac{8\eta^2}{3c} \left(1 + \frac{2 \eta \sigma_2^2}{c} \right) \sigma_1^2 L.
\end{align*}
\end{proof}

We prove Proposition \ref{prop:sgd_lower_bound}, which is restated below.
\begin{proposition}
Suppose that Assumptions {\bf(A1)} and {\bf(A2)} hold, and that there exists $\sigma_3>0$ such that for any $w\in\bR^d$, $\bE[\|\epsilon_{t+1}(w)\|^2] \geq \sigma_3^2$.
Running SGD with $\eta \leq \frac{3\sigma_3^2}{32\sigma_1^2 L}$, we get
\[ \frac{\eta^2\sigma_3^2}{8} \leq D_T^2 + O(T^{-1}). \]
\end{proposition}
\begin{proof}
Let us evaluate the term $\bE[\| w_{t} - v_*\|^2 ]$ as follows.
On one hand, by using the update rule of SGD, for $t\geq 1$,
\begin{align*}
    \bE\left[\| w_{t} - v_*\|^2 \right]
    &= \bE\left[\| w_{t-1} - \eta \nabla f(w_{t-1}) - \eta \epsilon_{t}(w_{t-1}) - v_*\|^2 \right] \\
    &= \bE\left[\| w_{t-1} - \eta \nabla f(w_{t-1}) - v_* \|^2 
    - 2\eta \epsilon_{t}(w_{t-1})^\top (w_{t-1} - \eta \nabla f(w_{t-1}) - v_*)
    + \eta^2 \| \epsilon_{t}(w_{t-1}) \|^2 \right] \\
    &= \bE\left[\| w_{t-1} - \eta \nabla f(w_{t-1}) - v_* \|^2 
    + \eta^2 \| \epsilon_{t}(w_{t-1}) \|^2 \right] \\
    &\geq \eta^2 \sigma_3^2,
\end{align*}
where we used $\bE[\epsilon_{t}(w_{t-1})^\top (w_{t-1} - \eta \nabla f(w_{t-1}) - v_*)] = \bE[ \bE[\epsilon_{t}(w_{t-1}) | \cF_{t-1} ]^\top (w_{t-1} - \eta \nabla f(w_{t-1}) - v_*)] = 0$.
On the other hand, by using $v_t = w_t - \eta \nabla f(w_t)$,
\begin{align*}
    \bE\left[\| w_{t} - v_*\|^2 \right]
    &= \bE\left[\| v_{t} + \eta \nabla f(w_{t}) - v_*\|^2 \right] \\
    &\leq 2\bE\left[\| v_{t} - v_* \|^2 + \eta^2 \| \nabla f(w_{t}) \|^2 \right].
\end{align*}
Hence, by taking the sum over $t \in \{0,1,\ldots,T\}$, we have 
\begin{align*}
    \frac{1}{2}\|w_0 - v_*\|^2 + \frac{1}{2}\eta^2 \sigma_3^2 T
    &\leq \sum_{t=0}^T \bE\left[\| v_{t} - v_* \|^2 \right] 
    + \eta^2 \sum_{t=0}^T \bE\left[ \| \nabla f(w_{t}) \|^2 \right] \\
    &\leq \sum_{t=0}^T \bE\left[\| v_{t} - v_* \|^2 \right] 
    + \eta^2 \sum_{t=0}^T \bE\left[ \| \nabla f(w_{t+1}) \|^2 \right]
    + \eta^2 \| \nabla f(w_0)\|^2. 
\end{align*}
Since $w_{t+1} = v_t - \eta \epsilon_{t+1}(w_t)$, we get by apply Proposition \ref{prop:convergence_stationary_point},
\[ \frac{1}{2}\|w_0 - v_*\|^2 + \frac{1}{2}\eta^2 \sigma_3^2 T
\leq \sum_{t=0}^T \bE\left[\| v_{t} - v_* \|^2 \right] 
+ \frac{4}{3}\eta\bE[f(w_0)] + \frac{2}{3}\eta^3 \sigma_1^2 L (T+2)
+ \eta^2 \| \nabla f(w_0)\|^2. \]
Therefore, we have 
\begin{align*}
    \frac{1}{4}\eta^2 \sigma_3^2 - \frac{4}{3}\eta^3 \sigma_1^2 L
    &\leq \frac{1}{T+1}\sum_{t=0}^T \bE\left[\| v_{t} - v_* \|^2 \right] 
    + \frac{1}{T+1}\left( \frac{4}{3}\eta\bE[f(w_0)] + \eta^2 \| \nabla f(w_0)\|^2 - \frac{1}{2}\|w_0 - v_*\|^2 \right) \\
    &= \frac{1}{T+1}\sum_{t=0}^T \bE\left[\| v_{t} - v_* \|^2 \right] 
    + O(T^{-1}).
\end{align*}
Since $\eta \leq \frac{3\sigma_3^2}{32\sigma_1^2 L}$, we have $\frac{1}{8}\eta^2 \sigma_3^2 \leq \frac{1}{4}\eta^2 \sigma_3^2 - \frac{4}{3}\eta^3 \sigma_1^2 L$. Therefore, the above inequality concludes the proof.
\end{proof}

\section{Additional Experiments}\label{sec:add_exp}
\begin{table}[th]
\caption{Comparison of test classification accuracies on CIFAR10 dataset. All methods adopt the multi-step strategy for the step size schedule.}
\label{table:comparison_on_cifar10_table}
\begin{center}
\begin{footnotesize}
  \begin{tabular}{cccc}
    \toprule
~&
\multicolumn{3}{c}{CIFAR10} 
\\
\midrule
~&
$\eta$ &
ResNet-50 &
WRN-28-10 
\\
\midrule
SGD & 
\textit{s} &
95.95 \scriptsize{(0.10)} &
96.85 \scriptsize{(0.16)} 
\\
\addlinespace
\multirow{3}{*}{\begin{tabular}{c} Averaged\\SGD \end{tabular}}  &  
\textit{s} &
96.58 \scriptsize{(0.14)} &
97.24 \scriptsize{(0.07)} 
\\
&
\textit{m} &
\textbf{96.89 \scriptsize{(0.05)}}&
\textbf{97.44 \scriptsize{(0.04)}}
\\
&
\textit{l} &
96.27 \scriptsize{(0.16)} &
97.05 \scriptsize{(0.09)} 
\\
\bottomrule
\end{tabular}
\end{footnotesize}
\end{center}
\end{table}

We run SGD and averaged SGD on CIFAR10 dataset with the step size strategy \textit{`l'} under the same settings as in Section \ref{sec:exp}.
Table \ref{table:comparison_on_cifar10_table} lists the results including this case. 
We observe that the large step size \textit{`l'} does not work so well on CIFAR10 dataset compared to other schedules.
We hypothesize this is because CIFAR10 is not so difficult dataset and does not require stronger bias induced by a larger step size.

We also validate the cosine annealing strategy for the step size, which is frequently used due to its excellent performance.
We used the symbols \textit{`s'}, \textit{`m'}, and \textit{`l'} for the cosine annealing depending on the last step sizes which are set to $0$, $0.004$, and $0.02$, respectively.
The parameter averaging for averaged SGD is taken over the last quarter of the training.
From the table, we observe the usefulness of parameter averaging for the cosine annealing schedule as well.

\begin{table}[th]
\caption{Comparison of test classification accuracies on CIFAR100 and CIFAR10 datasets. All methods adopt cosine annealing for the step-size schedule.}
\label{comparison_table2}
\begin{center}
\begin{footnotesize}
  \begin{tabular}{ccccccccc}
    \toprule
~&
~&
\multicolumn{3}{c}{CIFAR100} &
~&
\multicolumn{3}{c}{CIFAR10} 
\\
\midrule
~&
$\eta$ &
ResNet-50 &
WRN-28-10 &
Pyramid &
$\eta$ &
ResNet-50 &
WRN-28-10 &
Pyramid
\\
\midrule
SGD & 
\textit{s} &
82.26 &
82.68 &
82.97 &
\textit{s} &
96.58 &
97.00 &
96.66
\\
\addlinespace
\multirow{2}{*}{\begin{tabular}{c} Averaged\\SGD \end{tabular}}  &  
\textit{s} &
\textbf{83.89} &
84.28 &
85.14 &
\textit{s} &
\textbf{97.01} &
97.28 &
97.07
\\
&
\textit{l} &
83.21 &
\textbf{84.49} &
\textbf{85.47} &
\textit{m} &
96.86 &
\textbf{97.51} &
\textbf{97.32}
\\
\midrule
SAM &  
\textit{s} &
83.35 &
84.64 &
86.24 &
\textit{s} &
96.40 &
96.89 &
97.61
\\
\addlinespace
\multirow{2}{*}{\begin{tabular}{c} Averaged\\SAM \end{tabular}}  &  
\textit{s} &
83.18 &
84.94 &
86.79 &
\textit{s} &
\textbf{96.56} &
97.14 &
\textbf{97.55}
\\
&
\textit{l} &
\textbf{83.58} &
\textbf{85.26} &
\textbf{86.84} &
\textit{m} &
96.51 &
\textbf{97.19} &
97.48
\\
\bottomrule
\end{tabular}
\end{footnotesize}
\end{center}
\end{table}

\begin{figure}[t]
\center
\begin{tabular}{cc}
\begin{minipage}[t]{0.42\linewidth}
\centering
\includegraphics[width=\textwidth]{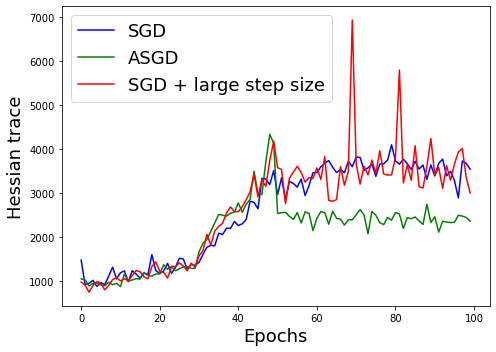} \\
\end{minipage} 
\hspace{3mm}
\begin{minipage}[t]{0.42\linewidth}
\centering
\includegraphics[width=\textwidth]{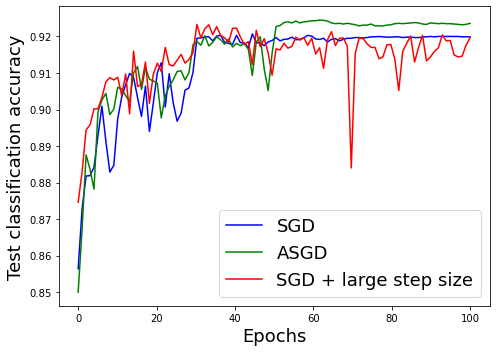} \\
\end{minipage} 
\end{tabular}
\caption{The figure depicts the curve of the trace of Hessian $\nabla^2 f(w)$ and test loss functions achieved by SGD, SGD with large step size, and averaged SGD. Each algorithm is run to train the standard convolutional neural network on Fashion MNIST dataset.}
\label{fig:fashin_mnist}
\end{figure}

Finally, we run SGD, SGD with a large step size, and averaged SGD to train the standard convolutional neural network on Fashion MNIST dataset to confirm how efficiently sharpness and classification accuracy can be optimized by each method.
We note the large step size used for SGD is the same as that for averaged SGD.
We plot the trace of Hessian $\nabla^2 f(w)$ and test loss functions in Figure \ref{fig:fashin_mnist}.
From this figure, we observe that the averaged SGD converges to a flatter region and achieves the highest classification accuracy on the test dataset as expected in our theory.

\section{Motivating Example 1}\label{sec:example}
We present an example that demonstrates the implicit bias of SGD and averaged SGD. Moreover, we theoretically and empirically verify that averaged SGD can get closer to a flat minimum than SGD.
\subsection{Problem Setup}
In this section, we present a motivating example that verifies the convergence to a flat minimum and a certain separation between SGD and averaged SGD.
We consider a one-dimensional objective function $f: \bR \rightarrow \bR$ defined below: for $\delta > 0$, 
\begin{equation}\label{eq:example}
f(w) = \frac{1}{2}(w-1)^2 + g_\delta(w),
\end{equation}
where $g_\delta: \bR \rightarrow \bR$ is a scaled mollifier: 
\begin{align*}
g_\delta(w) = 
\left\{
\begin{array}{ll}
- \delta \exp \left( 1 - \frac{1}{1-\left(\frac{w}{\delta}\right)^2} \right) & (|w| < \delta), \\
0 & (|w| \geq \delta).
\end{array}
\right.
\end{align*}
$g_\delta(w) = \delta g_1(w/\delta)$ is a scaling of the well-known mollifier of $g_1$ which is an infinitely differentiable function with a compact support.
That is, $g_\delta$ is a smooth function whose support is $[-\delta,\delta]$.
Note that the function $f(w)$ has a local minimum in $[-\delta,\delta]$, which becomes sharp when $\delta$ is small. 
See Figure \ref{fig:example} which depicts graphs of $g_\delta,~G_\delta,~f,$ and $F$.

\begin{figure}[t]
\center
\includegraphics[width=0.9\textwidth]{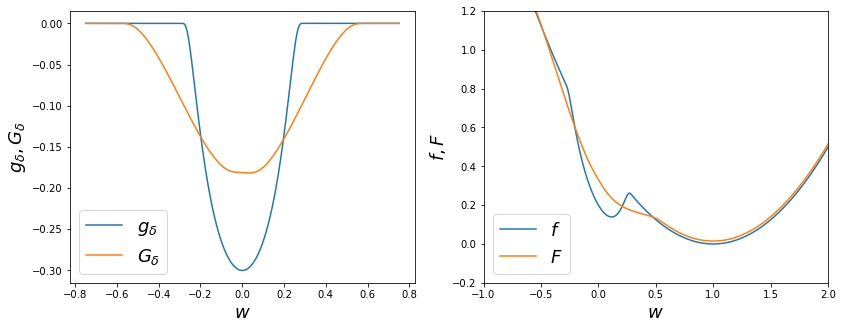} \\
\caption{The left figure plots the mollifier $g_\delta$ (blue) and smoothed mollifier $G_\delta$ (orange), and the right figure plots the objective $f$ (blue) and smoothed objective $F$ (orange). Hyperparameters are set to $\delta=0.3,~r=1,$ and $\eta=0.3$.}
\label{fig:example}
\end{figure}

The maximum values of the first and second derivatives of $g_1$ are bounded. Thus, we define constants $C_1, C_2$ by
\[ C_1 = \max \left\{1 , \max_w |g'_1(w)|\right\},~ C_2 = \max_w |g''_1(w)|. \]

Since $g''_\delta(w) = \frac{1}{\delta} g''_1(w/\delta)$, we see the second derivative of $g_\delta$ is bounded by $C_2 \delta^{-1}$. 
Hence, Lipschitz smoothness (boundedness of Hessian) $L$ of $f$ is $1 + C_2 \delta^{-1}$.

Next, we consider the uniform noise on the interval $[-r,r]$ for $r>0$, i.e., $\epsilon \sim U[-r,r]$ and suppose $\epsilon (w,z)=\epsilon(z) (=\epsilon'(v,z))$ where $\Omega \ni z \mapsto \epsilon(w,z)$ is an explicit representation of the random noise. 
In other words, noise distribution does not change in $w$.
In this case, we see $\sigma_1^2 = \sigma_3^2 = \bE[\epsilon^2 ]\leq \frac{r^2}{3}$ and $\sigma_2 = 0$.
The smoothed objective $F$ with the noise $\epsilon'$ and step-size $\eta$ is 
\begin{align*}
F(v) 
&= \bE[ f(v-\eta \epsilon') ] \\
&= \frac{1}{2}(v-1)^2 + \eta^2 \sigma_1^2 + \bE[ g_\delta( v - \eta \epsilon' ) ] \\
&\sim \frac{1}{2}(v-1)^2 + \bE[ g_\delta( v - \eta \epsilon' ) ] .
\end{align*}

We consider the following problem setup:
\begin{align}
&\delta \leq \frac{1}{4 (1+2C_1)},  \label{example_cond1}\\
&r \geq \frac{64}{3}C_1 (\delta + C_2).  \label{example_cond2}
\end{align}
Note that we can choose arbitrarily small $\delta>0$ and large $r$ which satisfy the above inequalities.
Under these conditions, we can choose the following step size $\eta$ for appropriate smoothing.
\begin{equation}\label{step_size_cond}
\frac{2 C_1 \delta}{r} \leq \eta \leq \min\left\{ \frac{1}{4r} - \frac{\delta}{r}, \frac{3\delta}{32(\delta + C_2)} \right\}.
\end{equation}
We note that the above step size satisfies $\eta \leq \min\left\{ \frac{1}{2L},~\frac{3\sigma_3^2}{32\sigma_1^2 L}\right\}$ required in Propositions \ref{prop:sgd_upper_bound} and \ref{prop:sgd_lower_bound} since $\frac{3\delta}{32(\delta + C_2)} = \frac{3}{32L} =  \frac{3\sigma_3^2}{32\sigma_1^2L} < \frac{1}{2L}$.

\subsection{Evaluation of SGD and Averaged SGD}
Under the above setup (\ref{example_cond1})--(\ref{step_size_cond}), we can estimate constants appearing in the convergence results of SGD and averaged SGD as follows (for the detail see the next subsection):
\begin{align}\label{eq:estimated_constants}
  L = 1 + \frac{C_2}{\delta},~\sigma_1^2=\sigma_3^2 = \frac{r^2}{3},~\sigma_2 = 0, 
  \mu=1,~c=\frac{1}{3},~\gamma=0,~M=\frac{8}{9}.
\end{align}
Moreover, the minimum of the smoothed objective is $v_*=1$, and a sharp minimum ($\sim 0$) can be eliminated by smoothing.

Therefore, for SGD we obtain by Proposition \ref{prop:sgd_upper_bound},
\begin{align*}
    D_T^2 &= \frac{1}{T+1}\sum_{t=0}^T \bE[\|  v_t - v_* \|^2]\\
    &\leq O\left(T^{-1}\right) + \frac{2\eta\sigma_1^2}{c} + \frac{8\eta^2 \sigma_1^2 L}{3c} \left(1 + \frac{2 \eta \sigma_2^2}{c} \right) \\
    &\leq O\left(T^{-1}\right) + 6\eta r^2 + 8\eta^2 r^2 \left(1 + \frac{C_2}{\delta} \right).
\end{align*}
We see from this inequality, $\eta_*=\frac{2C_1 \delta}{r}$ is the best choice of the step-size, resulting in
\begin{align*}
    D_T \leq O\left(T^{-1/2}\right) + \sqrt{12 C_1 \delta r + 32 C_1^2 \delta \left( \delta + C_2 \right)},    
\end{align*}
and 
\[ \frac{C_1 \delta}{\sqrt{6}} \leq D_\infty \leq 2\sqrt{ (3 r + 8C_1 C_2 ) C_1 \delta} + 4\sqrt{2}C_1 \delta, \]
where we applied Proposition \ref{prop:sgd_lower_bound} for the lower bound.
This result means SGD avoids a sharp minimum (i.e., $v\sim 0$ under small $\delta>0$) and converges to a flat minimum $v_*=1$, and a too large noise will affect the convergence to $v_*$.

Hence, Eq.~(\ref{eq:improvement_condition}): $0 \ll D_\infty \ll 9/8$ is satisfied under the above setting with mildly small $\delta$, meaning that the error of averaged SGD is strictly smaller than that of SGD since $\lim_{T\rightarrow \infty}\bE[\|\overline{v}_T-v_*\|] \leq \frac{8}{9}D_\infty^2$ by Eq.~(\ref{eq:asgd_upper_bound_limit}).
We empirically observed this phenomenon in Figure \ref{fig:example_sgd_asgd} in which we run SGD and averaged SGD for problems with small $\delta = 0.1$ and relatively large $\delta = 0.3$.

\begin{figure}[t]
\center
\includegraphics[width=0.9\textwidth]{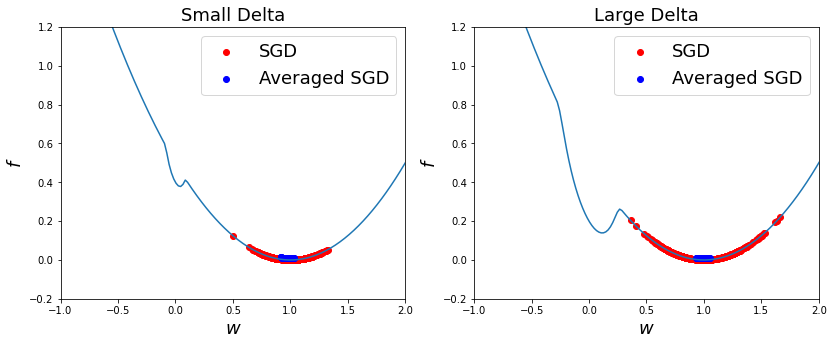} \\
\caption{The figures plot the convergent points of SGD and averaged SGD for the problem (\ref{eq:example}). For the left case, we set $\delta=0.1,~r=1,~\eta=\delta/r = 0.1$ and for the right case, we set $\delta=0.3,~r=1,~\eta=\delta/r = 0.3$.
}
\label{fig:example_sgd_asgd}
\end{figure}

\subsection{Estimation of Constants}\label{subsec:estimate_constants}
We verify the estimations of constants in (\ref{eq:estimated_constants}).
$L$, $\sigma_1^2$, and $\sigma_2$ are already obtained, thus, $\mu, c$, and $M$ remain.

\paragraph{Minimum and estimation of $\mu$.} We first see that under our problem setting, the local minimum around the origin is eliminated and $1$ is the optimal solution of $F$, i.e., $v_* = 1$.

The smoothed function $G_\delta(v) \defeq \bE[ g_\delta( v - \eta \epsilon' ) ]$ and its derivative $G'_\delta(v)$ are calculated as follows:
\begin{align*}
 G_\delta(v) &= \int_{-r}^r g_\delta( v - \eta t ) \frac{1}{2r} \mathrm{d}t, \\
 G'_\delta(v) &= \int_{-r}^r g'_\delta( v - \eta t ) \frac{1}{2r} \mathrm{d}t.
\end{align*}
By taking into account $\mathrm{supp}(g_\delta) = [-\delta, \delta]$, the smoothed objective $G_\delta(v)$ is constant on $\{ |v| \leq \eta r - \delta \} \cup \{ |v| \geq \eta r + \delta\}$, and thus, $G_\delta'$ is non-zero only on $\mathrm{supp}(G'_\delta) = [-\eta r - \delta, -\eta r + \delta] \cup [\eta r - \delta, \eta r + \delta]$. For graphs of $g_\delta$ and $G_\delta$, see Figure \ref{fig:example} (left).
Since $\eta r + \delta < 1/4$ under (\ref{step_size_cond}), $G_\delta$ is a constant around $v=1$, that is, $v=1$ is still a local minimum of $F$.

We evaluate the bound on $G'_\delta$ on $\mathrm{supp}(G'_\delta)$ below. For $v \in [\eta r - \delta, \eta r + \delta]$ the support of $g'_\delta( v - \eta t)$ in $t \in \bR$ is $[ (v-\delta)/\eta, (v+\delta)/\eta]$, we get
\begin{align*}
  0 \leq G'_\delta(v)
  &= \int_{-r}^r g'_\delta( v - \eta t ) \frac{1}{2r} \mathrm{d}t \\
  &\leq \int_{-\infty}^\infty |g'_\delta( v - \eta t )| \frac{1}{2r} \mathrm{d}t \\  
  &= \int_{\frac{v-\delta}{\eta}}^{\frac{v+\delta}{\eta}} |g'_\delta( v - \eta t )| \frac{1}{2r} \mathrm{d}t \\
  &\leq C_1\int_{\frac{v-\delta}{\eta}}^{\frac{v+\delta}{\eta}}  \frac{1}{2r} \mathrm{d}t
  = \frac{C_1\delta}{\eta r},
\end{align*}
where we used $|g'_\delta (v)|=|g'_1(v/\delta)| \leq C_1$.
A bound on $[-\eta r - \delta, -\eta r + \delta]$ is also obtained in the same way. Thus, we see
\begin{align*}
  \left\{
\begin{array}{rll}
  - \frac{C_1\delta}{\eta r} \leq &G'_\delta(v) \leq 0 & (v \in [-\eta r - \delta, -\eta r + \delta]), \\
  0 \leq &G'_\delta(v) \leq \frac{C_1\delta}{\eta r} & (v \in [\eta r - \delta, \eta r + \delta]), \\
  ~&G'_\delta(v) = 0 & (\textrm{else}). \\
\end{array}
\right.
\end{align*}

If there are additional stationary points of $F$, they should exist in $[\eta r - \delta, \eta r + \delta] = \mathrm{supp}(G'_\delta) \setminus [-\eta r - \delta, -\eta r + \delta]$ because of the sign of $G'_\delta$ and $\mathrm{supp}(G'_\delta) \subset (-\infty, 1/4)$.
However, since $\eta r + \delta \leq 1/4$ and $\frac{C_1\delta}{\eta r} \leq 1/2$ under (\ref{step_size_cond}), we see
\[ \max_{v \in [\eta r - \delta, \eta r + \delta]} F'(v) \leq (\eta r + \delta) - 1 + \frac{C_1\delta}{\eta r} \leq \frac{1}{4} - 1 + \frac{1}{2} = - \frac{1}{4}. \]
Hence, $v_* = 1$ is the unique local minimum (i.e., optimal solution) of $F$ and we can conclude $\mu = 1$.

\paragraph{Estimation of $c$.}
From the above argument, we get
\begin{align*}
  F'(v)(v-1) &= (v-1)^2 + G'_\delta(v)(v-1) \\
  &\geq \left\{
\begin{array}{ll}
  (v-1)^2  & (v \in [-\eta r - \delta, -\eta r + \delta]), \\
  (v-1)^2 +  \frac{C_1\delta}{\eta r} (v-1) \geq (v-1)^2 + \frac{1}{2} (v-1) & (v \in [\eta r - \delta, \eta r + \delta]), \\
  (v-1)^2  & (\textrm{else}). \\
\end{array}
\right.
\end{align*}
Clearly, $1/2 \leq 2(1-v)/3$ for $v \leq \eta r + \delta \leq 1/4$.
Thus, $F'(v)(v-1) \geq (v-1)^2/3$ on $v \in [\eta r - \delta, \eta r + \delta]$ and we conclude $c=1/3$.

\paragraph{Estimation of $\gamma$ and $M$.}
Noting $v_*=1$ and $F''(1) = 1$, we have
\begin{align*}
  | F'(v) - F''(v_*)(v-v_*)|
  = | (v-1) + G'_\delta(v) - (v-1)| = |G'_\delta(v)|.
\end{align*}
Because of the problem setup, it is enough to verify $\gamma=0,~M=\frac{8}{9}$ satisfy $|G'_\delta(v)| \leq M |v-1|^2$ on $v \in [\eta r - \delta, \eta r + \delta]$.
Since $|G'_\delta(v)| \leq 1/2$ and $v \leq 1/4$ for $v$ in this interval, we have
\[ |G'_\delta(v)| \leq \frac{1}{2} \leq M \frac{9}{16} \leq M (v-1)^2. \]
This concludes $\gamma=0,~M=\frac{8}{9}$.

\section{Motivating Example 2}\label{sec:example2}
We provide an example for which SGD and averaged SGD behave in significantly different ways.
For this example, the convergence of SGD cannot be guaranteed whereas averaged SGD can converge.
\subsection{Problem Setup}
We consider a one-dimensional objective function $f: \bR \rightarrow \bR$ defined below: for $\delta > 0$, 
\begin{equation}\label{eq:example2}
f(w) = \frac{1}{2}w^2 + g_{\delta}(w),
\end{equation}
where $g_\delta: \bR \rightarrow \bR$ is a scaled mollifier: 
\begin{align*}
g_\delta(w) = 
\left\{
\begin{array}{ll}
\delta^2 \exp \left( 1 - \frac{1}{1-\left(\frac{w}{\delta}\right)^2} \right) & (|w| < \delta), \\
0 & (|w| \geq \delta).
\end{array}
\right.
\end{align*}

The function $f$ is symmetry and has two local minima. 
Taking the derivative at $v = \alpha \delta$ ($\alpha > 0$), we see
\[ f'(\alpha \delta) = \alpha\delta \left( 1 - \frac{2}{(1-\alpha^2)^2}\exp\left( 1- \frac{1}{1-\alpha^2}\right) \right). \]
Therefore, the value of $\alpha >0$ such that $f'(\alpha \delta) = 0$ is independent of $\delta$, meaning optimal solutions of $f$ are $O(\delta)$.

Let $\epsilon \sim U[-r,r]$ be the uniform distribution as in Section \ref{sec:example}.
Then, the smoothed objective $F$ with the noise $\epsilon'$ and step-size $\eta$ is 
\begin{align*}
F(v) 
&= \frac{1}{2}v^2 + \eta^2 \sigma_1^2 + \bE[ g_\delta( v - \eta \epsilon' ) ] \\
&\sim \frac{1}{2}v^2 + \bE[ g_\delta( v - \eta \epsilon' ) ] .
\end{align*}

Figure \ref{fig:example2} depicts the above functions. We note that $g_\delta$ is slightly different from that in the previous section.
\begin{figure}[t]
\center
\includegraphics[width=0.9\textwidth]{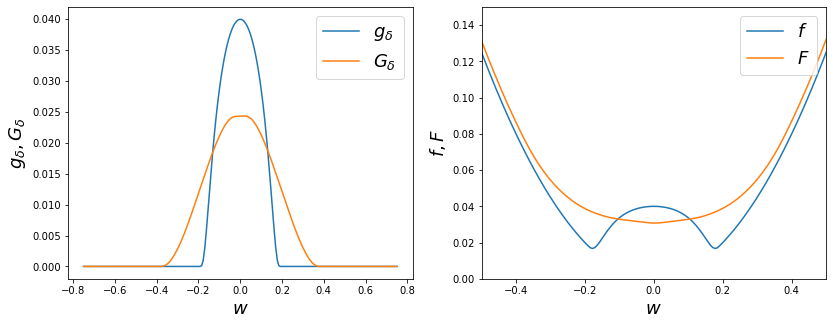} \\
\caption{The left figure plots the mollifier $g_\delta$ (blue) and smoothed mollifier $G_\delta$ (orange), and the right figure plots the objective $f$ (blue) and smoothed objective $F$ (orange). Hyperparameters are set to $\delta=0.2,~r=1,$ and $\eta=0.2$.}
\label{fig:example2}
\end{figure}

We define constants $C_1, C_2$ as follows:
\[ C_1 = \max_w |g'_1(w)|,~ C_2 = \max\{1, \max_w |g''_1(w)|\}. \]
For sufficient smoothing, we consider the following problem setup:
\begin{align}
C_1 \delta^2 &\leq \frac{\eta r}{2} (\eta r - \delta),  \label{example2_cond1}\\
C_2 \delta &\leq \frac{\eta r}{2}.  \label{example2_cond2}
\end{align}
Note that for a given $\eta r > 0$, the above conditions can be satisfied if $\delta>0$ is sufficiently small.

Lipschitz smoothness $L$ of $f$ is $L=1 + C_2$, and hence $\eta$ should satisfy $\eta \leq \min\left\{ \frac{3\sigma_3^2}{32\sigma_1^2 L}, \frac{1}{2L} \right\} = \frac{3}{32L} = \frac{3}{32(1+C_2)}$.
We note contrary to the case in the previous section, $L$ does not suffer from $\delta$.
Hence we do not need to make $\eta$ small depending on $\delta$. 

\subsection{Evaluation of SGD and Averaged SGD}
Under conditions (\ref{example2_cond1}) and (\ref{example2_cond2}), we can estimates constants appearing in the convergence results of SGD and averaged SGD as follows (for the detail see the next subsection):

\begin{align}\label{eq:estimated_constants2}
  L = 1 + C_2,~\sigma_1^2=\sigma_3^2=\frac{r^2}{3},~\sigma_2 = 0, 
  \mu=1,~c=\frac{1}{2},~\gamma=\frac{C_1\delta^2}{\eta r},~M=0.
\end{align}
Moreover, the smoothed objective $F$ has a unique solution $v_*=0$ as shown in the next subsection.

Therefore, for SGD we obtain by Proposition \ref{prop:sgd_upper_bound},
\begin{align*}
    D_T^2 &
    = \frac{1}{T+1}\sum_{t=0}^T \bE[\|  v_t - v_* \|^2]\\
    &\leq O\left(T^{-1}\right) + \frac{2\eta\sigma_1^2}{c} + \frac{8\eta^2 \sigma_1^2 L}{3c} \left(1 + \frac{2 \eta \sigma_2^2}{c} \right) \\
    &\leq O\left(T^{-1}\right) + 4\eta r^2 + \frac{16}{3}\eta^2 r^2 (1 + C_2).
\end{align*}
That is,
\begin{align}\label{eq:example2_sgd}
    D_T  \leq O\left(T^{-1/2}\right) + 2\sqrt{\eta r^2 + \frac{4}{3}\eta^2 r^2 (1 + C_2)},
\end{align}
and 
\[ \frac{\eta r}{2\sqrt{6}} \leq D_\infty \leq 2\sqrt{\eta}r + 4\eta r \sqrt{\frac{1+C_2}{3}}, \]
where we applied Proposition \ref{prop:sgd_lower_bound} for the lower bound.
Taking into account (\ref{example2_cond2}), we see the right hand side of this bound is $\Omega(\delta)$.
Then, since the optimal solution of $f$ is also $O(\delta)$, we can see that upper-bound (\ref{eq:example2_sgd}) on SGD cannot distinguish solutions of the original function $f$ and the smoothed objective $F$.

On the other hand, the condition Eq.~(\ref{eq:improvement_condition}): $C_1\delta^2/(\eta r) \ll D_\infty \ll \infty$ is satisfied under the setting $\delta^2 \ll \eta^2 r^2$. Then, we have $\lim_{T\rightarrow \infty}\bE[\|\overline{v}_T-v_*\|] \leq C_1\delta^2/(\eta r)$ by by Eq.~(\ref{eq:asgd_upper_bound_limit}). Since, $\delta$ can be small independent of $\eta$ and $r$, averaged SGD can approach a solution $v_*=0$ of $F$ up to $O(\delta^2)$. 

This difference between SGD and averaged SGD is intuitively understandable. In the case of using a large step size compared to $\delta$ (i.e., $\delta^2 \ll \eta^2 r^2$), SGD does not converge to a point and hence oscillates in the valley. Then, its mean should be located almost at the origin (i.e., $v_*$), meaning the convergence of averaged SGD. In the case of using a small step size, both SGD and averaged SGD do not converge to $v_*$ but converge to a minimizer of $f$. 
Indeed, we empirically observed this phenomenon in Figure \ref{fig:example2_sgd_asgd}. Both SGD and averaged SGD using small step size $\eta=0.01$ compared to $\delta$ converge to the solutions of $f$ because of the weakened bias and precise optimization. On the other hand, when using a relatively large step size $\eta=0.2$, SGD cannot converge, whereas averaged SGD converges to the solution $v_*=0$ of $F$ very accurately.

\begin{figure}[t]
\center
\includegraphics[width=0.9\textwidth]{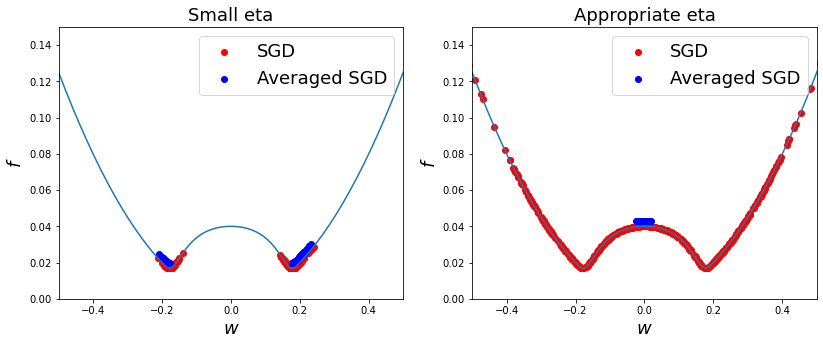} \\
\caption{The figures plot the convergent points of SGD and averaged SGD using small step size $\eta=0.01$ (left) and appropriately large $\eta=0.2$ (right). For both cases, we set $\delta = 0.2$ and $r = 1$.}
\label{fig:example2_sgd_asgd}
\end{figure}

\subsection{Estimation of Constants}
\paragraph{Minimum and estimation of $\mu$.}
In a similar way to Section \ref{sec:example}, the smoothed function $G_\delta(v) = \bE[ g_\delta(v-\eta t)]$ and its derivative $G'_\delta$ satisfy the following. 
By taking into account $\mathrm{supp}(g_\delta) = [-\delta, \delta]$, the smoothed objective $G_\delta(v)$ is constant on $\{ |v| \leq \eta r - \delta \} \cup \{ |v| \geq \eta r + \delta\}$, and thus, $G_\delta'$ is non-zero only on $\mathrm{supp}(G'_\delta) = [-\eta r - \delta, -\eta r + \delta] \cup [\eta r - \delta, \eta r + \delta]$. 
Specifically, we obtain
\begin{align*}
  \left\{
\begin{array}{rll}
  0 \leq &G'_\delta(v) \leq \frac{C_1\delta^2}{\eta r} & (v \in [-\eta r - \delta, -\eta r + \delta]), \\
  - \frac{C_1\delta^2}{\eta r} \leq &G'_\delta(v) \leq 0 & (v \in [\eta r - \delta, \eta r + \delta]), \\
  ~&G'_\delta(v) = 0 & (\textrm{else}). 
\end{array}
\right.
\end{align*}

It is verified that the derivative of $F$ vanishes only at $v=0$ as follows. 
If there is a point so that $F'(v)=0$ other than $v\neq 0$, $v$ should be in $\mathrm{supp}(G_\delta)$.
For $v \in [\eta r - \delta, \eta r + \delta]$, we get 
\[ F'(v) = v + G'_\delta(v) \geq \eta r - \delta - \frac{C_1 \delta^2}{\eta r} > 0, \]
where we used (\ref{example2_cond1}).
In a similar way, we have $F'(v) < 0$ on $v \in [-\eta r - \delta, -\eta r + \delta]$. 
Moreover, $F'(0)=0$ clearly holds. Thus, $v_*=0$ is a unique solution.

Since $G_\delta$ is constant on $[-\eta r + \delta, \eta r - \delta]$ under the condition (\ref{example2_cond2}), we see $G''_\delta(0)=0$.
Therefore, we get $F''(0)=v$, and hence $\mu=1$.

\paragraph{Estimation of $c$.}
From the above argument, we get
\begin{align*}
  F'(v)v &= v^2 + G'_\delta(v)v \\
  &\geq \left\{
\begin{array}{ll}
  v^2 + \frac{C_1 \delta^2}{\eta r}v  & (v \in [-\eta r - \delta, -\eta r + \delta]), \\
  v^2 - \frac{C_1 \delta^2}{\eta r}v  & (v \in [\eta r - \delta, \eta r + \delta]), \\
  v^2  & (\textrm{else}). \\
\end{array}
\right.
\end{align*}

Since $\frac{C_1 \delta^2}{\eta r} \leq \frac{\eta r - \delta}{2} \leq -\frac{v}{2}$ on $[-\eta r -\delta, -\eta r + \delta]$ and $\frac{C_1 \delta^2}{\eta r} \leq \frac{\eta r - \delta}{2} \leq \frac{v}{2}$ on $[\eta r -\delta, \eta r + \delta]$, 
we have $F'(v)v \geq \frac{v^2}{2}$. Thus, $c = \frac{1}{2}$.

\paragraph{Estimation of $\gamma$ and $M$.}
Since $G''_\delta(0)=0$, we have $F'(v) - F''(0)v = G'_\delta(v)$ for any $v \in \bR$, leading to $|F'(v) - F''(0)v| = |G'_\delta(v)| \leq \frac{C_1 \delta^2}{\eta r}$.
Hence, $\gamma = \frac{C_1 \delta^2}{\eta r}$ and $M=0$.

\fi
\small

\end{document}